\theoremstyle{definition}
\newtheorem{remark}{Remark}
\newtheorem{assumption}{Assumptions}
\newtheorem*{problem}{Problem statement}
\newcommand\T{{\hspace{-0pt}\intercal}}
\newcommand*\diff{\mathop{}\!\mathrm{d}}
\newcommand{\comments}[1]{\textcolor{blue}{#1}}
\newtheorem{prop}{Proposition}
\DeclareFontFamily{U}{tipa}{}
\DeclareFontShape{U}{tipa}{m}{n}{<->tipa10}{}
\newcommand{\arc@char}{{\usefont{U}{tipa}{m}{n}\symbol{62}}}%
\newcommand{\arc}[1]{\mathpalette\arc@arc{#1}}
\newcommand{\arc@arc}[2]{%
  \sbox0{$\m@th#1#2$}%
  \vbox{
    \hbox{\resizebox{\wd0}{\height}{\arc@char}}
    \nointerlineskip
    \box0
  }%
}
\DeclareSIUnit\mt{\milli\tesla} 
\begin{document}

\title{On Radiation-Based Thermal Servoing:\\ New Models, Controls and Experiments}

\author{Luyin Hu, David Navarro-Alarcon, Andrea Cherubini and Mengying Li%
\thanks{This work is supported in part by the Research Grants Council (RGC) of Hong Kong under grant number 14203917, in part by PROCORE-France/Hong Kong Joint Research Scheme sponsored by the RGC and the Consulate General of France in Hong Kong under grant F-PolyU503/18, in part by the Jiangsu Industrial Technology Research Institute Collaborative Funding Scheme under grant 43-ZG9V, in part by the Key-Area Research and Development Program of Guangdong Province 2020 under project 76 and in part by the Hong Kong PolyU under grants ZZHJ and YBYT.}%
\thanks{Luyin Hu, David Navarro-Alarcon and Mengying Li are with The Hong Kong Polytechnic University, Kowloon, Hong Kong. Corresponding author email: {\texttt{\small dna@ieee.org}}}%
\thanks{Andrea Cherubini is with LIRMM, University of Montpellier CNRS, Montpellier, France.}
}

\bstctlcite{IEEEexample:BSTcontrol}

\markboth{}%
{Hu \MakeLowercase{\textit{et al.}}: On Radiation-Based Thermal Servoing}
\maketitle

\begin{abstract}
In this paper, we introduce a new sensor-based control method that regulates (by means of robot motions) the heat transfer between a radiative source and an object of interest.
This valuable sensorimotor capability is needed in many industrial, dermatology and field robot applications, and it is an essential component for creating machines with advanced thermo-motor intelligence. 
To this end, we derive a geometric-thermal-motor model which describes the relation between the robot's active configuration and the produced dynamic thermal response. 
We then use the model to guide the design of two new thermal servoing controllers (one model-based and one adaptive), and analyze their stability with Lyapunov theory.
To validate our method, we report a detailed experimental study with a robotic manipulator conducting autonomous thermal servoing tasks. To the best of the authors' knowledge, this is the first time that temperature regulation has been formulated as a motion control problem for robots.
\end{abstract}

\begin{IEEEkeywords}
				Thermoception, visual servoing, sensor-based control, robotic manipulation
\end{IEEEkeywords}
\IEEEpeerreviewmaketitle
\begin{figure*}
				\centering
				\includegraphics[width =2 \columnwidth]{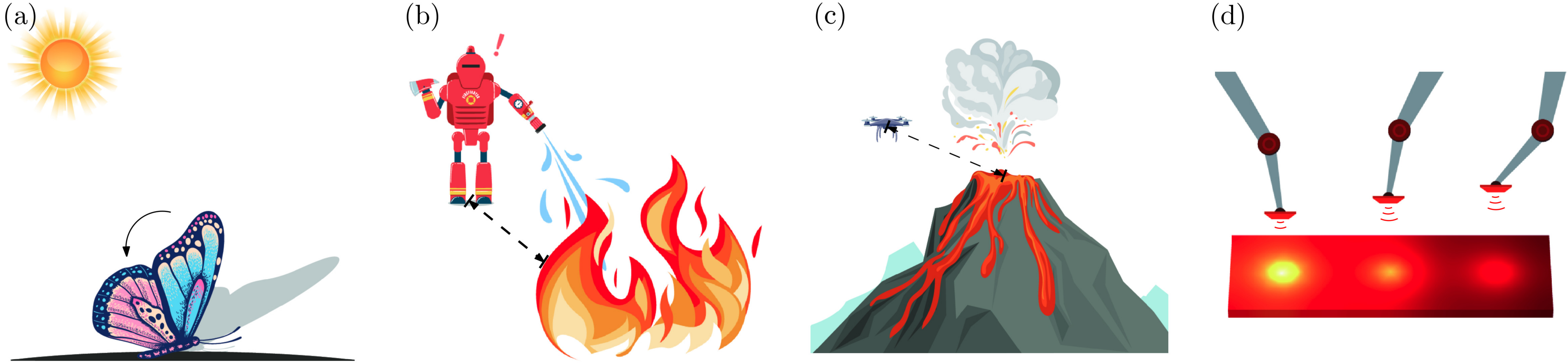}
				\caption{Creatures and robots with thermo-motor intelligence: When exposed to the sun, butterflies adjust their wings configuration to control their temperature (a); robotic systems with thermal servoing algorithms can be used for firefighting (b), volcano exploration (c) and industrial applications (d).}
				\label{fig:illustrations}
\end{figure*}
\section{INTRODUCTION}
\IEEEPARstart{T}{hermal servoing} is a feedback control problem that deals with the regulation of an object's  temperature by means of motor actions of a rigid robot, which can either manipulate the object or the heat source.
It is a frontier problem that has numerous important applications (e.g. in industrial process control, cosmetic dermatology, fire-fighting missions, etc.) where temperature needs to be dynamically controlled and the environment is uncertain.
The quality, performance and safety of these (otherwise open-loop) applications can be considerably improved by incorporating thermal sensorimotor capabilities.

From a control systems perspective, the automation of this type of temperature-critical tasks requires: (a) the computation of a geometric-thermal-motor (GTM) model describing the relation between the robot's motion and the consequent thermal response, and (b) the development of a sensor-based strategy to autonomously impose a desired heat profile onto the surface of interest.
Note that unlike other perception modalities for robot control (e.g. vision \cite{dna2014_ijrr}, proximity \cite{Journals:Cherubini2013}, touch \cite{marie2020_ral}, audition \cite{Proceedings:Magassouba2016} and even smell \cite{Proceedings:Rahbar2017}), thermoception has not been fully formalized in the literature as a bona fide feedback signal for motion control. In the robotics community, we still lack the framework to fully exploit it.
Up to now, the overwhelming use of thermoception in robotics has been to monitor processes but not to establish \emph{explicit thermal servo-loops} \cite{Journals:Gade2014}, which are needed to accurately regulate temperature.
Our aim in this paper is precisely to develop the necessary framework that enables the design of thermal servoing controls with radiative heat sources.

\subsection{Related Work}
Although thermal sensing is a mature technology and has a rich history in the automation of many tasks (see e.g. \cite{Journals:Benli2019,Journals:Cao2018,Jorunals:He2015,Journals:Lai2015}), its use as a feedback signal for robot control has not been sufficiently studied in the literature \cite{cherubini2020_frontneuro}, where only a few works have addressed this challenging servo-control problem.
Some representative works that deal with explicit thermal control include: \cite{fuel_cell}, where a fuzzy controller is developed to regulate the temperature of a fuel cell actuator; \cite{shape_memory_alloy}, where the influence of temperature in the deformation behavior of a surgical robot is investigated, and an explicit thermal regulator is designed; \cite{welding}, where a control method is designed to maintain a constant tool temperature by adjusting the spindle speed in a stir friction welding robot.
However, in these types of methods, temperature control is achieved by directly modulating the power of the heat-generating components. This approach is not suitable when considering external heat sources, e.g. wildfires \cite{firefighting} and sunlight \cite{solar_tracking}, or when the source's power should not be varied, e.g. in cosmetic procedures \cite{muddassir2020_tmech}.

A different strategy is to use sensor-based control, i.e. to dynamically change the source-object geometric configuration to achieve a desired thermal response (similar to what many organisms do \cite{Journals:McKemy2007}). This can be easily done by rigid robots, since their basic function is motion control.
Such approach demands the development of appropriate models that can effectively capture the system's GTM relations.
This idea has been partially demonstrated in \cite{car_mold}, where the optimal fixed location of multiple radiating heaters in a process is automatically calculated to evenly imprint a desired thermal profile onto a surface. Yet, the heater is static and the method requires exact knowledge of all thermodynamic parameters (which are generally unknown).
The proposed approach has also the potential to be used e.g. in fire-fighting \cite{Proceedings:Imdoukh2017} or volcano exploration robots \cite{volcano} to calculate optimal trajectories that avoid overheating or damaging the robot's components.

The dynamic coupling between temperature and motion may seem unintuitive for humans \cite{Journals:Fu2016}. 
Yet, many organisms extensively exploit these relations. 
For instance, marine animals that inhabit hydro-thermal vents manage their energetic demands by controlling their proximity to the source \cite{Journals:Lee2003}. 
Butterflies increase their body temperature by basking dorsally/laterally to maximize sun expose \cite{butterflies}. Ground-dwelling insects adaptively change their body configuration with respect to the sun-heated ground to regulate their temperature \cite{insect_thermoregulationt}.
Such advanced thermoception-based behaviors can be used to solve many real-world problems (see Fig. \ref{fig:illustrations}). 
However, these capabilities have not yet been fully incorporated in robot control, a discipline with good track record of borrowing inspiration from nature \cite{Journals:Natale2002,Journals:Arechavaleta2008,Journals:Na2020}, but which seems to be lagging in this direction.

\subsection{Our Contribution}
As a feasible solution to the above-mentioned issues, in this paper, we present a rigorous formulation for robot thermal servoing with radiative sources. The main contributions are summarized as follows:
\begin{itemize}
    \item We develop an efficient algorithm for computing in real-time the radiation-based thermal interaction matrix which relates robot velocity and object temperature rate.
    \item We present a novel robot control method for automatically regulating the temperature of grasped objects.
    \item We report a detailed experimental study to validate the proposed theory.
\end{itemize}
To the best of the authors' knowledge, this is the first time that temperature regulation has been formulated in the literature as a robot servoing problem.
The proposed approach has the potential to advance the development of multimodal motion controllers for robots.

The rest of the paper is as follows: Section \ref{section:mathematical modelling} presents the mathematical models; Section \ref{section:controller design} derives the controller; Section \ref{section:results} reports results; Section \ref{conclusion} gives final conclusions.

\section{MATHEMATICAL MODELING}
\label{section:mathematical modelling}

\subsection{Notation}
Throughout this manuscript, we denote all \emph{column} vectors by small bold letters, e.g. $\mathbf v \in \mathbb{R}^{n\times 1}$, and matrices by capital bold letters, e.g. $\mathbf{M} \in \mathbb{R}^{m \times n}$.

\subsection{Heat Transfer Model}
\label{section:GTM Overview}
In the following sections, we introduce basic thermodynamic concepts (we refer the reader to \cite{Fundamentals_of_heat_and_mass_transfer,modest2013radiative}) that are needed for developing the system's GTM model. 
To this end, consider a robot manipulator with end-effector configuration denoted by a vector $\mathbf x \in \mathbb{R}^n$. 
The robot rigidly grasps (through an adiabatic layer) a planar object whose surface temperature is to be controlled by changing the relative pose to a heat source.
The heat transfer model is composed of three main parts (depicted in Fig. \ref{fig:model}): (i) heat source, (ii) heat collector (i.e. the object), and (iii) surrounding environment. 
Thermophysical parameters of different parts are denoted by the same symbol but with different subscripts. 
We denote the (constant) temperature of the heat source and the (varying) temperature of the object by ${T}_{1}$ and $T_{2}$, respectively. 
We assume both temperatures to be spatially uniform during the heat transfer process. 
The environment temperature (assumed to be constant) is denoted by ${T}_3$. 
\begin{remark}
In this paper, we use the subscripts $i=1,2,3$ to denote the thermophysical parameters of the heat source, the object and the environment, respectively (a convention followed by many works dealing with heat transfer).
\end{remark}

Heat transfer occurs amongst the three parts whenever ${T}_{1}$, ${T}_{2}, {T}_{3}$ have different values. 
The direction of heat transfer is always from a high temperature part to a low temperature part. 
We denote the net energy transfer rate to the object by $Q_{2}$, where a positive value indicates energy inflow. 
We introduce \(q_{2}=Q_{2}/ A_2\) to represent the surface's net heat flux and $v=\diff T_2/\diff t$ to describe the temporal change of the measured temperature ${T}_{2}$. 
According to the energy conservation laws, these quantities satisfy the relation:
\begin{equation}
    v = \frac{1}{m_{2}c_{2}}Q_{2}
\label{eq:temp speed}
\end{equation}
where $m_{2}$ denotes object's mass and $c_{2}$ denotes the material's specific heat. To synthesize a thermal servoing controller, it is useful to find an expression of the following form:
\begin{equation}
v=f( \mathbf{x},{T}_{2})
\label{eq:thermal-geometric-model}
\end{equation}
which describes the thermal-geometric relation between the robot configuration and the temperature rate.

\subsection{Radiation Exchange Between Planar Surfaces}
\label{section:Radiation Exchange Between Surfaces}
In this subsection, we show how to calculate $Q_{2}$ between planar surfaces when thermophysical properties are known.  
According to different mechanisms involved in the heat transfer processes \cite{lienhard2005heat}, the object's net heat flux $q_{2}$ satisfies the expression $q_{2}=q_{rad}+q_{conv}+{q}_{cond}$, for radiative ${q}_{rad}$, convective ${q}_{conv}$ and conductive ${q}_{cond}$ fluxes. 
In our case of study, thermal radiation is the dominant heat transfer mode; Note that $q_{cond}$ and $q_{conv}$ are negligible since the object is grasped through an adiabatic layer and the source's temperature is much higher than those of the object and environment.
\begin{assumption}\label{assumptions_1}
We assume that the following conditions are satisfied during the task (see Fig. \ref{fig:model}):
\begin{enumerate}
    \item All surfaces have uniform temperature properties.
    \item All surfaces are gray, i.e. they are diffuse emitters with equal emittance and absorptance.
    \item The environment/room is modeled as a black body (i.e. $\varepsilon_{3}={\alpha}_{3}=1$, see the variables' definition below).
\end{enumerate}
\end{assumption}
The net energy transfer rate ${Q_{2}}$ has the following form:
\begin{equation}
Q_{2}=A_2 {q}_{rad}={A}_{2}({\alpha}_{2} {G}_{2}- {E}_{2})
\label{eq:radiative rate}
\end{equation}
where $A_{2}$ denotes the object's surface area, $G_{2}$ the radiative flux incident at the surface, $\alpha_{2} \in [0,1]$ the object's absorptance and $E_{2}$ the heat flux emitted by a surface (i.e. emissive power) which is approximated using the Stefan-Boltsman law \cite{Fundamentals_of_heat_and_mass_transfer}
\begin{equation}
E_{2}={\varepsilon}_{2} \sigma {T}_{2}^{4}
\label{emmisive heat}
\end{equation}
with $\varepsilon_{2} \in [0,1]$ the material's emittance, and $\sigma$ is the Stefan-Boltzmann constant.

\begin{figure}
				\centering
				\includegraphics[width = 0.9 \columnwidth]{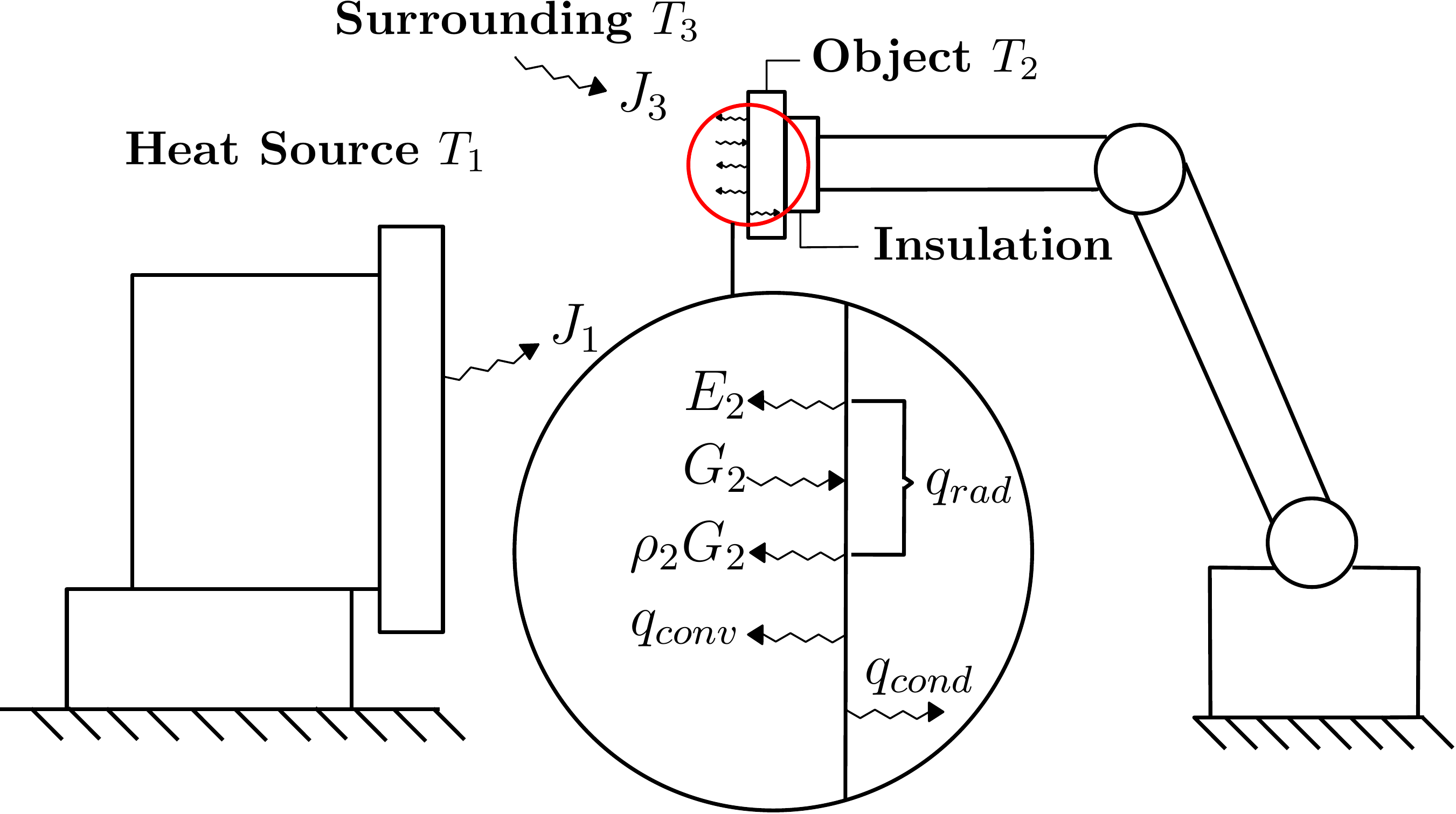}
				\caption{Representation of the heat transfer model. A part of the object surface is magnified to show the various heat transfer processes.}
				\label{fig:model}
\end{figure}

Note that for an opaque surface (i.e. with zero transmittance), its reflectance $\rho_{i}$ and absorptance $\alpha_{i}$ satisfy $\rho_{i}+{\alpha}_{i}=1$.  
Radiosity is defined as $J_i = {E}_{i}+{\rho}_{i} {G}_{i}$, and since for our heat source $E_1\gg\rho_{1}G_{1}$, we can fairly approximate it as $J_{1} \approx E_{1}$.

\remark{The view factor ${F}_{ij}$ represents the fraction of $J_i$ that is incident on surface $j$. 
The view factor depends on the end-effector configuration, i.e. $F_{ij}=F_{ij}(\mathbf x)$. Thus, its calculation is essential for deriving the geometric-thermal-motor model.
The detailed derivation of $F_{ij}$ for different cases and configurations is presented in later sections. 
Here, we assume $F_{ij}$ in known and only focus on the derivation of $Q_{2}$.}

The radiation incident to a surface is the summation of the corresponding portion of radiation coming from other surfaces. 
Thus, $G_i$ can be calculated from the expression:
\begin{equation}
A_{2} G_{2}=\sum_{j=1}^{3} F_{j2} A_{j} J_{j}=F_{12}A_{1}J_{1}+F_{22}A_{2}J_{2}+F_{32}A_{3}J_{3}
\label{eq:irradiation summation}
\end{equation}
where by using the reciprocity relation $A_{i} F_{i j}=A_{j} F_{j i}$, the summation rule $\sum_{j=1}^{N} F_{ij}=1$, and the view factor property for planar surfaces $F_{ii}=0$ (readers are referred to \cite{Fundamentals_of_heat_and_mass_transfer}), we can simplify \eqref{eq:irradiation summation} into:
\begin{equation}
\begin{split}
{A}_{2}{G}_{2}&=F_{21}{A}_{2}J_{1}+{F}_{23}A_{2}J_{3}\\
&=F_{21}{A}_{2}E_{1}+(1-F_{21})A_{2}E_3
\end{split}
\label{eq:A2G2}
\end{equation}
Substitution of \eqref{eq:A2G2} into \eqref{eq:radiative rate} yields:
\begin{equation}
Q_{2}=A_{2}\alpha_{2}(E_{1}-E_3)F_{21}+A_{2}\alpha_{2}E_{3}-A_{2}E_{2}
\label{eq:q2_derive}
\end{equation}
which we substitute alongside \eqref{emmisive heat} into \eqref{eq:temp speed} to obtain the following key expression for the object's temperature rate:
\begin{equation}
    \label{eq:temperature rate}
    v =\lambda_{1}F_{21} -\lambda_{2}{T}_{2}^4+\lambda_{3}
\end{equation}
for \emph{constant} scalar parameters $\lambda_{1}$, $\lambda_{2}$, and $\lambda_{3}$ satisfying
\begin{equation}
     \lambda_{1}=\frac{A_2{\alpha_{2}\sigma}({\varepsilon_{1}}{T}_{1}^4-{T}_{3}^4 )}{m_{2}c_{2}},~
     \lambda_{2}=\frac{A_2{\varepsilon_{2}}\sigma}{m_{2}c_{2}},~
     \lambda_{3}=\frac{A_2{\alpha}_{2}\sigma {T}_{3}^4}{m_{2}c_{2}}
     \label{eq:lambda values}
\end{equation}

\subsection{View Factor Analytical Definition}
\label{section:view factor}
In this section, we provide the general expression of $F_{2 1}$, which we will instantiate (in the following sections) for various configurations.
To this end, consider the elementary areas ${\diff}A_{1}$ and ${\diff}A_{2}$ on the source and object surfaces, respectively. 
These areas are separated by a length $r$ that forms polar angles $\theta_1$ and $\theta_2$ (see Fig. \ref{fig:VF}). 
The classical definition of the view factor is:
\begin{equation}
F_{2 1}=\frac{1}{A_{2}} \int_{A_{2}} \int_{A_{1}} \frac{\cos \theta_{2} \cos \theta_{1}}{\pi r^{2}} \diff A_{2} \diff A_{1}
\label{eq:VF surface integral}
\end{equation}{}
The solution of \eqref{eq:VF surface integral} is usually complicated to derive. 
A variety of methods \cite{viewfactor1,viewfactor2,viewfactor3,vujivcic2006numerical} have been proposed to calculate it. 
Here, we use the method in \cite{sparrow2018radiation}, which converts the double surface integrals into double contour integrals as follows:
\begin{equation}
\label{eq:VF contour integral}
F_{21}=\frac{1}{2 \pi A_{2}} \oint_{\Gamma_{1}} \oint_{\Gamma_{2}} \ln s \diff \mathbf s_{2} \cdot \diff \mathbf s_{1},
\end{equation}
where $\Gamma_i$ denotes the contour of the $i$th surface, $\mathbf s_i$ the position vector of an arbitrary point on boundary ${\Gamma}_i$, and $s=\| \mathbf{s}_{2}-\mathbf{s}_{1} \|$ the distance between two contour points. 
The advantage of using this approach is its efficient computation time \cite{rao1996efficient}.

\subsection{Thermal Servoing with Parallel Circular Surfaces}
\label{section: parallel surfaces}
\begin{figure}
				\centering
				\includegraphics[width =0.8\columnwidth]{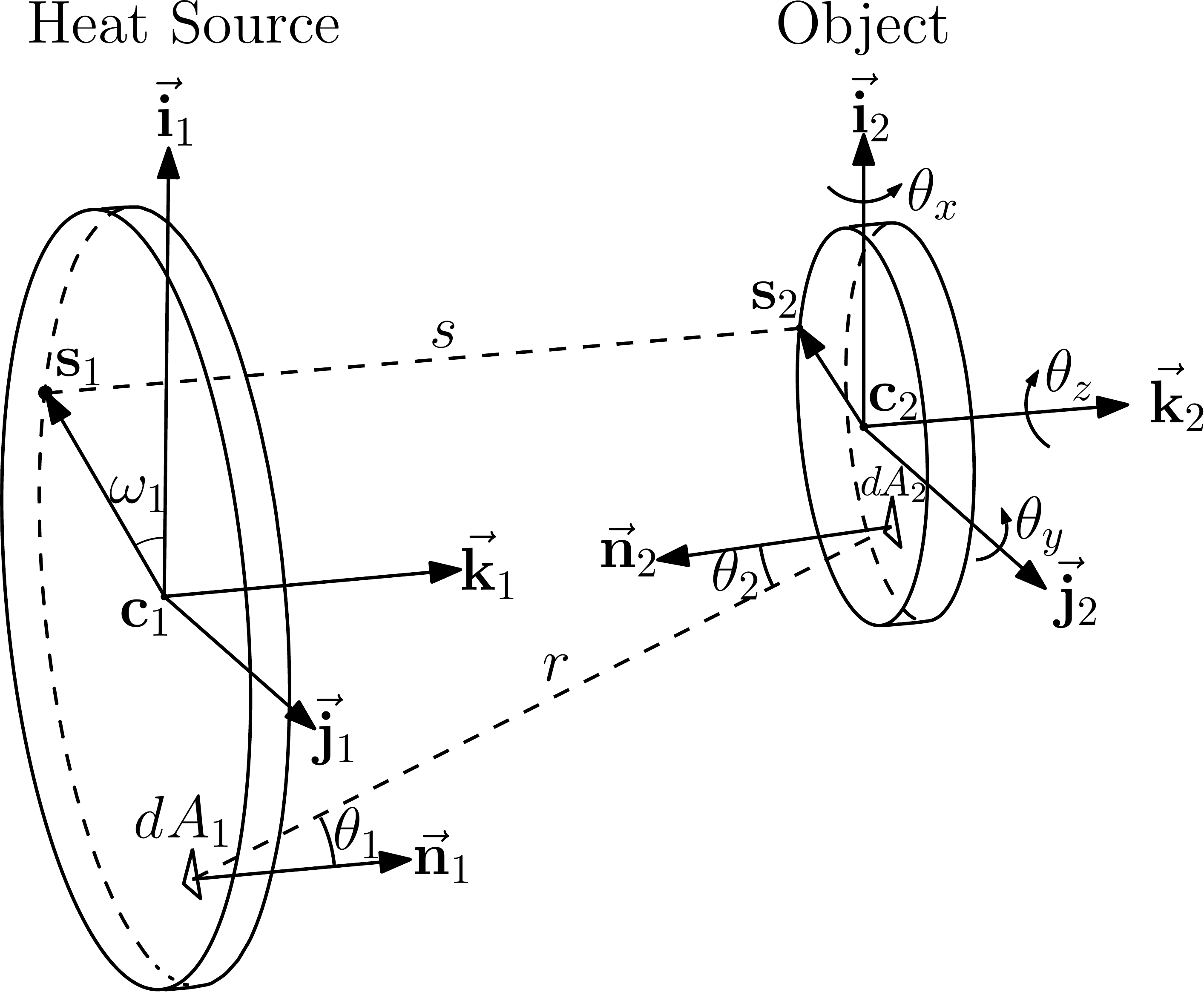}
				\caption{Geometry of the view factor between two elementary surfaces.}
				\label{fig:VF}
\end{figure}
In this section, we derive the thermal servoing model for two parallel source-object surfaces. To this end, we denote the surfaces' center and radius by $\mathbf{c}_i$ and $r_i$, respectively. 
The origin of the coordinate system $\vec{\mathbf{i}}_1\vec{\mathbf{j}}_1\vec{\mathbf{k}}_1$ is set at $\mathbf{c}_1$, with a unit basis vector $\vec{\mathbf{k}}_1$ along the normal $\vec{\mathbf{n}}_{1}$, and a unit basis vector $\vec{\mathbf{i}}_{1}$ perpendicular to the ground. 
We define $\vec{\mathbf{i}}_2\vec{\mathbf{j}}_2\vec{\mathbf{k}}_2$ as the translation of $\vec{\mathbf{i}}_1\vec{\mathbf{j}}_1\vec{\mathbf{k}}_1$, with origin at $\mathbf{c}_2$. 
The scalars $\omega_i$ denote the angle between ${\vec{\mathbf i}}_i$ and $\mathbf{s}_i$.
We set the frames' centers at $\mathbf{c}_1=[0,0,0]^\T $ and $\mathbf{c}_2= [p_{1},p_{2},p_{3}]^\T$, with respect to $\vec{\mathbf{i}}_1\vec{\mathbf{j}}_1\vec{\mathbf{k}}_1$.
The parametric position vectors $\mathbf{s}_i$ are then computed as:
\begin{equation}
\begin{split}
\mathbf{s}_1&= \begin{bmatrix}
{r}_{1}\cos{\omega}_{1}&{r}_{1}\sin{\omega}_{1}&0 \end{bmatrix}^\T
\\
\mathbf{s}_2 &=\begin{bmatrix}
{r}_{2}\cos{\omega}_{2}+p_1&{r}_{2}\sin{\omega}_{2}+p_{2}&p_3 \end{bmatrix} ^\T.
\label{eq:parametric equation2}
\end{split}
\end{equation}
Their differential changes satisfy the following relations:
\begin{equation}
\begin{split}
&\diff{\mathbf{s}_{1}} = \begin{bmatrix}
-{r}_{1}\sin{\omega}_{1}\diff{{\omega}_{1}}&{r}_{1}\cos{\omega}_{1}\diff{{\omega}_{1}}&0 \end{bmatrix}^\T\\
&\diff{{\mathbf{s}_{2}}} =\begin{bmatrix}
-{r}_{2}\sin{\omega}_{2}\diff{{\omega}_{2}}&{r}_{2}\cos{\omega}_{2}\diff{{\omega}_{2}}&0 \end{bmatrix}^\T \\
&\diff\mathbf{s}_{1}\cdot\diff\mathbf{s}_{2}=r_{1}r_{2}\cos(\omega_{1}-\omega_{2})\diff\omega_{1}\diff\omega_{2}.
\label{eq:ds1ds2}
\end{split}
\end{equation}
Then, the distance $s = \| \mathbf{s}_{2}-\mathbf{s}_{1} \|$ can be derived as:
\begin{multline}
s=s(p_{1}, p_{2}, p_{3}, \omega_{1}, \omega_{2})= \left({p_1}^2+{p_2}^2+{p_3}^2+ \right.\\
2{p}_{1}(r_{2}\cos{\omega_2}-{r}_{1}\cos{\omega}_{1})
+2{p}_{2}(r_{2}\sin{\omega_2}-{r}_{1}\sin{\omega}_{1})\\
\left.+{r_1}^2+{r_2}^2-2{r_1}{r_2}\cos(\omega_2-\omega_1) \right)^\frac{1}{2}.
\label{eq:s}
\end{multline}
By substituting \eqref{eq:ds1ds2}--\eqref{eq:s} into \eqref{eq:VF contour integral}, $F_{21}$ can be calculated with the expression:
\begin{equation}
\label{eq:instantaneous view factor}
F_{21}=\int_{0}^{2 \pi} \int_{2 \pi}^{0} \frac{ {r}_{1} {r}_{2} }{2 \pi \mathrm{A}_{2}} \cos \left({\omega}_{1}-{\omega}_{2}\right)
\ln {s(\mathbf x, \omega_{1}, \omega_{2})}\diff{\omega}_{2} \diff{\omega}_{1}  
\end{equation}
where we define the end-effector position as $\mathbf{x} =  [p_{1} , p_{2} , p_{3}{]}^\T$.
By injecting \eqref{eq:instantaneous view factor} into \eqref{eq:temperature rate}, we can finally obtain the system's thermal-geometric relation:
\begin{equation}
    \label{eq:final temperature rate}
    v=f(\mathbf{x},T_{2})=\lambda_{1}{F}_{21} -\lambda_{2}{T_{2}}^4+\lambda_3
\end{equation}
where $f(\cdot)$ is the function in \eqref{eq:thermal-geometric-model}.
By differentiating \eqref{eq:final temperature rate}, we obtain the following key dynamic model:
\begin{equation}
    \dot{v}=\mathbf{l} \cdot {\mathbf{u}}-4\lambda_{2}{T_{2}}^3v
    \label{eq:dynamic system}
\end{equation}
where $\mathbf{l}=\lambda_{1}\frac{\partial F_{21}}{\partial \mathbf x}^\T$ denotes the interaction/Jacobian matrix (vector, in this example), and $\mathbf {u} = \dot{\mathbf x}\in\mathbb R^n$ is the robot's Cartesian velocity. 
The above expression is used for designing control laws for $\dot{\mathbf x}$ in the following sections. 
By using Leibniz integral rule\cite{flanders1973differentiation}, the interaction matrix can be expressed as:
\begin{equation}
\label{eq:3dof interaction matrix}
\mathbf{l}=
\begin{bmatrix}
\displaystyle \int_{0}^{2 \pi} \int_{2 \pi}^{0}  h(p_{1}+r_{2}\cos\omega_{2}-r_{1}\cos\omega_{1})\diff \omega_{2} \diff \omega_{1}\\
\displaystyle\int_{0}^{2 \pi} \int_{2 \pi}^{0}
h(p_{2}+r_{2}\sin\omega_{2}-r_{1}\sin\omega_{1}) \diff \omega_{2} \diff \omega_{1}\\
\displaystyle\int_{0}^{2 \pi} \int_{2 \pi}^{0}  h {p}_{3} \diff \omega_{2} \diff \omega_{1}
\end{bmatrix}
\end{equation}
with the scalar $h$ defined as:
\begin{equation}
h=\lambda_{1}\frac{ r_{1}r_{2}\cos(\omega_{1}-\omega_{2})}{2\pi{A}_{2}s^2}\
\end{equation}
Note that since it is hard to analytically compute the above double integrals, we use a numerical method \cite{2020SciPy-NMeth} to approximate $\mathbf{l}$ in real-time. 
See Appendix A for details.

\subsection{Circular Surfaces in Arbitrary Configurations}

\label{section:Circular Surfaces at Arbitrary Configurations}
\begin{figure}
				\centering
				\includegraphics[width =0.8 \columnwidth]{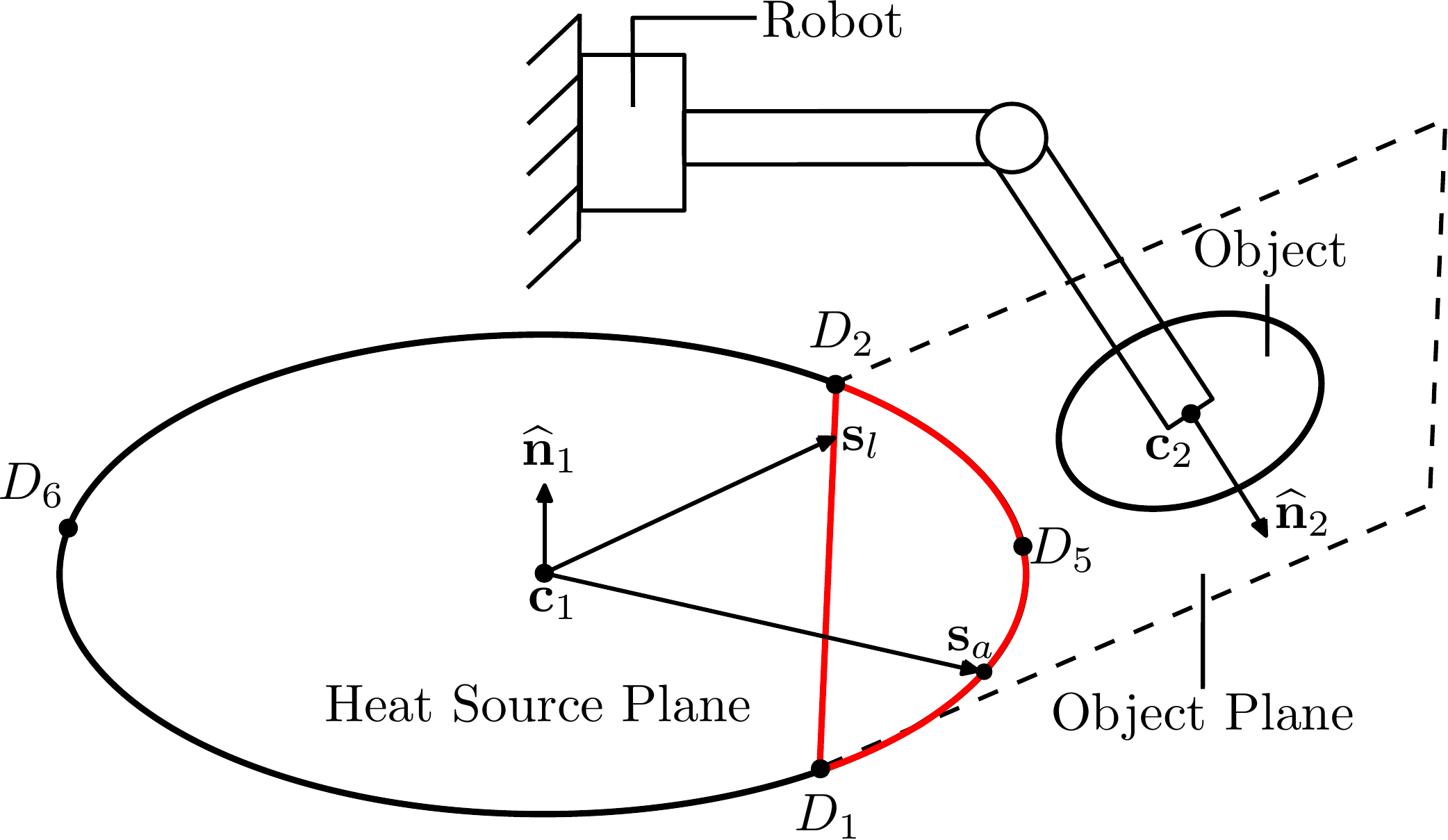}
				\caption{Conceptual representation of a  self-obstruction case.}
				\label{fig:self obstruction}
\end{figure}
In this section, we extend the parallel surfaces problem to a 6-DOF scenario, where the end-effector configuration is now defined as $\mathbf x = [p_1,p_2,p_3,\theta_{x}, \theta_{y}, \theta_{z}{]}^\T$, for $\theta_{i}$ as the angles around the object's coordinate system (see Fig. \ref{fig:VF}).
We denote by $\mathbf R$ the 3D rotation matrix corresponding to this relative orientation.
Note that in some configurations of this non-parallel case, radiation from a source's region cannot reach the front side of object's surface (hence, will not contribute to the heat inflow). We refer to this problem as \emph{self-obstruction}.

To model this situation, let us denote the object plane as $D_{1}D_{2} \mathbf c_{2}$, for $D_{1}$ and $D_{2}$ as the intersections with the bounded source plane.
This setup is depicted in Fig. \ref{fig:self obstruction}, where the heat source is divided into two surfaces: the red surface composed\footnote{The symbol $\arc{abc}$ denotes the arc that passes through the points $a$, $b$ and $c$.} of $\arc{D_{1}D_{5}D_{2}}D_{2}D_{1}$ and the black surface composed of $\arc{D_{2}D_{6}D_{1}}D_{1}D_{2}$. 
The black surface only ``sees'' the object's backside (i.e. the robot's gripper), thus, is omitted from the following calculation of $F_{21}$:
\begin{equation}
\label{eq:new F21}
F_{21}=\frac{1}{2 \pi A_{2}} \left( \oint_{\Gamma_{a}} \oint_{\Gamma_{2}} \ln s_{a}\diff \mathbf s_{2} \diff \mathbf s_{a} + \oint_{\Gamma_{l}} \oint_{\Gamma_{2}} \ln s_{l}\diff \mathbf s_{2} \diff \mathbf s_{l} \right) 
\end{equation}
where $\Gamma_{a}$ denotes the arc $\arc{D_{1}D_{5}D_{2}}$ and $\Gamma_{l}$ the line $D_{1}D_{2}$. 
To derive its respective position vectors $\mathbf{s}_{a}$ and $\mathbf{s}_{l}$, we compute the vector $\mathbf{n}_{2} = [n_{2}^1 , n_{2}^2 , n_{2}^3 {]}^\T$ normal to the object plane as:
\begin{equation}
    \mathbf{n}_{2} = \mathbf R \begin{bmatrix} 0 & 0 & -1 \end{bmatrix}^\T 
    \label{eq:normal_with_R}
\end{equation}
whose plane equation satisfies:
\begin{equation}
\label{eq:object plane equation}
    n_{2}^1 \left(x-p_{1}\right)+  n_{2}^2 \left(y-p_{2}\right)+ n_{2}^3 \left(z-p_{3}\right)=0
\end{equation}
To find the intersection with the plane, we use \eqref{eq:normal_with_R} and substitute $x=r_{1} \cos \varphi, $ $y=r_{1} \sin \varphi$ and $z=0$ into \eqref{eq:object plane equation}, for $\varphi$ as a variable angle. Self-obstruction occurs when there exist two solutions $\varphi_{1}$ and $\varphi_{2}$; After some algebraic operations, the arc and line parametric vectors are obtained from:
\begin{equation}
\label{eq:sasl}
    \mathbf{s}_{a}=\begin{bmatrix}
    r_{1}\cos{\varphi}\\r_{1}\sin{\varphi}\\0
    \end{bmatrix},~
    \mathbf{s}_{l}=\begin{bmatrix}
    x_{l}\\k_{l}(x_{l}-r_{1}\cos{\varphi_{2}})+r_{1}\sin{\varphi_{2}}\\0
    \end{bmatrix}
\end{equation}
for a distance range $x_{l} \in [r_{1} \cos {\varphi_{2}}, r_{1} \cos {\varphi_{1}}]$, an angle range $\varphi \in [ \varphi_{1},\varphi_{2}]$, and a slope $k_{l}= \frac{ \sin{\varphi_{2}}-\sin{\varphi_{1}}}{\cos{\varphi_{2}}-\cos{\varphi_{1}}}$ of the line $D_{1}D_{2}$.
The parametric vector on the object's contour is computed as: 
\begin{equation}
\label{eq: new s2}
    \mathbf{s}_{2} = \mathbf R 
    \begin{bmatrix}
    {r}_{2}\cos{\omega}_{2} & {r}_{2}\sin{\omega}_{2} & 0
    \end{bmatrix}^\T
    +
    \begin{bmatrix}
    p_1 & p_2 & p_3
    \end{bmatrix}^\T
\end{equation}
As with the parallel surface case, we compute the arc and line distances $s_{a}=\| \mathbf{s}_{2}-\mathbf{s}_{a} \|$ and $s_{l}=\| \mathbf{s}_{2}-\mathbf{s}_{l} \|$. 
The 6-DOF view factor for the self-obstruction case is as follows:
\begin{multline}
F_{21}=\frac{1}{2 \pi \mathrm{A}_{2}} \int_{\varphi_{1}}^{\varphi_{2}} \int_{2 \pi}^{0}\ln s_{a} \diff{\omega}_{1} \diff{\varphi}\\
+\frac{1}{2 \pi \mathrm{A}_{2}} \int_{r_{1}\cos{\varphi_{2}}}^{r_{1}\cos{\varphi_{1}}} \int_{2 \pi}^{0}\ln s_{l} \diff{\omega}_{1} \diff{x_{l}.}
\end{multline}
With this expression, we can derive a similar GTM model $\dot{v}=\mathbf{l}\cdot{\mathbf{u}}-4\lambda_{2}{T_{2}}^3v$, where $\mathbf u = \dot {\mathbf x} \in \mathbb R^6$ and the interaction matrix is $\mathbf{l}= \lambda_{1} \frac{\partial F_{21}}{\partial \mathbf x}^\T \in\mathbb R^6$.
For this 6-DOF case, we use the following numerical differentiation method to approximate $\mathbf{l}$:
\begin{equation}
\label{nummerical approximation of L_inv}
    \mathbf{l}=\lambda_{1}
    \begin{bmatrix}
    \cfrac{F_{21}(p_{1}+\diff p_{1},p_{2},...,\theta_{z})-F_{21}(\mathbf x)}{\diff p_{1}}\\
    \vdots\\
    \cfrac{F_{21}(p_{1},p_2,\ldots,\theta_{z}+\diff\theta_{z})-F_{21}(\mathbf x)}{\diff \theta_{z}}
    \end{bmatrix}
\end{equation}
Parallel programming techniques can be applied to achieve real-time capabilities, where every element of $\mathbf{l}$ is simultaneously calculated by an independent process.

\begin{remark}
When the object plane and the bounded source plane do not intercept, we classify it either as a common (non-self-obstruction) case or as a complete self-obstruction case. 
This classification can be done by checking whether the ray $\mathbf{n}_{2}$ intercepts the bounded source plane. 
For the common case, the entire contour of the heat source will be used to calculate $F_{21}$. For the complete self-obstruction case, $F_{21}=0$.
\end{remark}

\subsection{Thermal Servoing Model with Multiple Objects}
Here, we consider the case where the robot rigidly grasps $N$ objects with its end-effector, and independently regulates the temperature of each object. 
For this situation, we assume that heat exchange amongst the objects is negligible, therefore, the derivation of the $N$-object interaction matrix $\mathbf{L} \in \mathbb{R}^{N\times n}$ (where $n$ is the number of DOF of the robot) is analogous to the previous sections and is simply constructed with $N$ vectors $\mathbf l_i$ (defined for the $i$th object) as follows:
\begin{equation}
    \label{eq:Jacobian Matrix}
    \mathbf{L}=\begin{bmatrix}
    \mathbf{l}_{1} & \mathbf{l}_{2} & \cdots & \mathbf{l}_{N}
    \end{bmatrix}^\T. 
\end{equation}
\begin{figure}
				\centering
				\includegraphics[width =0.7 \columnwidth]{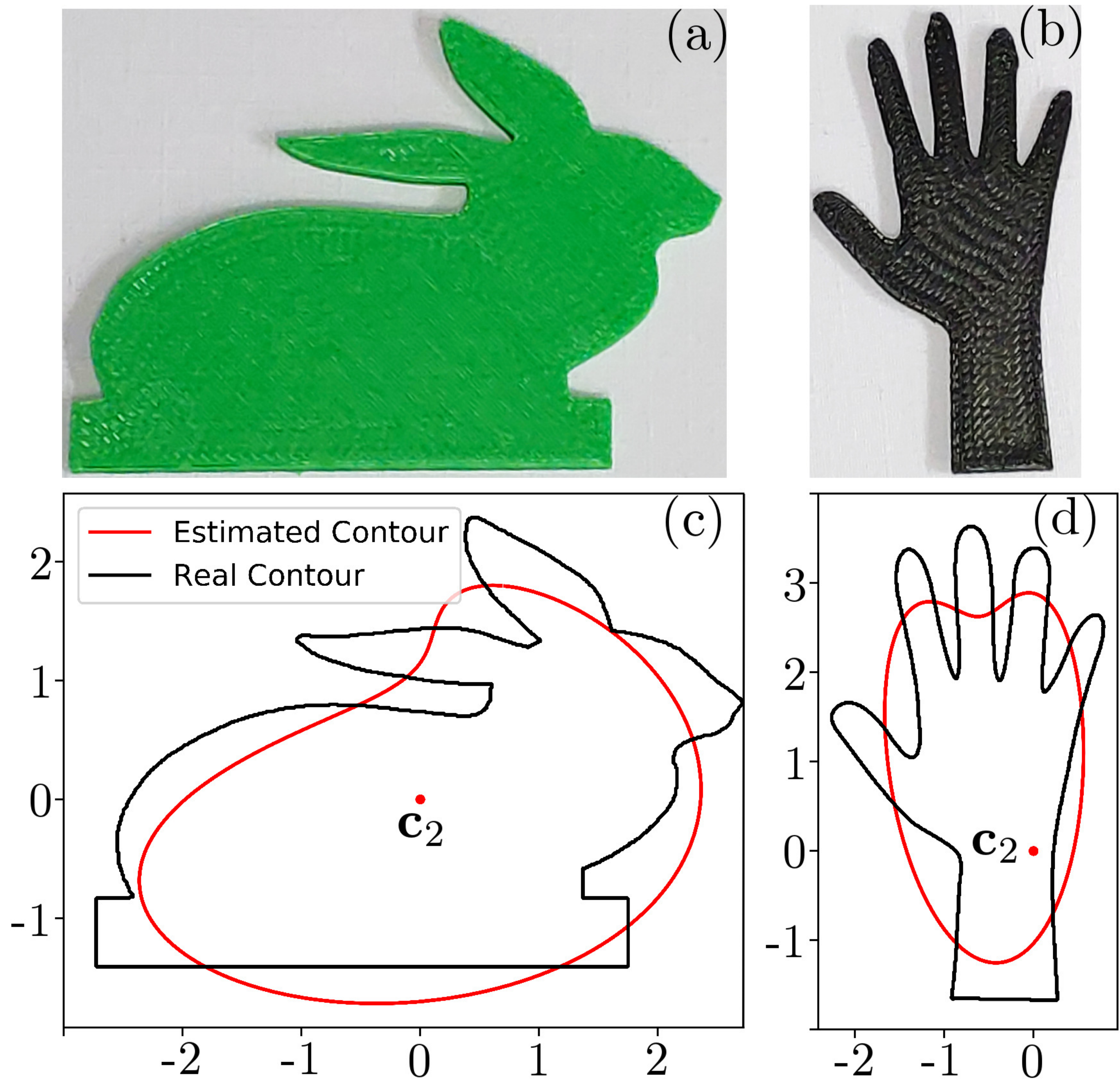}
				\caption{Contours of two objects approximated by a truncated Fourier series with $5$ harmonics.}
				\label{fig:bunny-hand real-Fourier}
\end{figure}
To ensure the independent control of each feedback temperature, we limit the number of objects to be fewer than the number of robot DOF, i.e. $N\le n$. 
For this multi-object system, we construct the following structures:
\begin{align}
    \boldsymbol{\tau}&=\begin{bmatrix}
    T_{2}^1&T_{2}^2&\cdots&T_{2}^N
    \end{bmatrix}^\T \in\mathbb R^N\\
    {\mathbf T} &= \operatorname{diag} \left((T_{2}^{1})^3,(T_{2}^{2})^3,\cdots,(T_{2}^{N})^3\right) \in\mathbb R^{N\times N} \\
    \mathbf{v}&=\begin{bmatrix}
    v^1&v^2&\cdots&v^N
    \end{bmatrix}^\T \in\mathbb R^N
\end{align}
The constant thermophysical parameters $\lambda_{2}$ are defined for $i$th object as $\lambda_{2}^{i}$, and are grouped into the constant matrix:
\begin{equation}
\boldsymbol{\Lambda}=
\operatorname{diag} (\lambda_{2}^1,\lambda_{2}^2\cdots,\lambda_{2}^N )\in \mathbb R^{N\times N} 
\end{equation}
With all these terms, the geometric-thermal-motor model can be extended to a multi-object case:
\begin{equation}
    \dot{\mathbf{v}}=\mathbf{L}{\mathbf{u}}-4{\mathbf T}\boldsymbol{\Lambda}\mathbf{v}
    \label{eq:multiple object dynamic system}
\end{equation}

\subsection{Irregularly Shaped Surfaces}
\label{section:Arbitrary Surfaces at Arbitrary Configurations}

In this section, we propose an efficient method to calculate view factors (which, in turn, are needed for calculating interaction matrices $\mathbf L$) of irregularly shaped flat surfaces. 
The core concept is to use truncated Fourier series for approximating the parametric position vectors $\mathbf{s}_{1}$ and $\mathbf{s}_{2}$.
This method can be used to model both the object and the heat source.

To see this, let us consider that the robot manipulates an object with an irregular contour (e.g. as in Fig. \ref{fig:bunny-hand real-Fourier}).
We can approximate the parametric position vector of $\mathbf{s}_{2}$ with the following approach \cite{tolstov2012fourier}:
\begin{equation}
\label{fourier s2}
   \mathbf{s}_{2}= \begin{bmatrix}
    p_{1}+\sum_{j=-F}^{F} \beta_{j}\sin(2 \pi j \phi)+b_{n}\cos(2 \pi j \phi)\\
    p_{2}+\sum_{j=-F}^{N} \beta_{j}\cos(2 \pi j \phi)-b_{n}\sin(2 \pi j \phi)\\
    p_{3}
    \end{bmatrix}
\end{equation}
for $\phi \in [0,1)$ as the arc-length along the contour, and $\beta_j$ and $b_j$ constant coefficients (which can be computed as in \cite{navarro2018fourier}). The object's contour is approximated with $2F+1$ harmonic terms.
The differential change can then be computed as:
\begin{equation}
    \diff{\mathbf{s}}_{2}=\begin{bmatrix}
    \sum_{j=-F}^{F}  2\pi j \left(\beta_{j}\cos(2 \pi j \phi)-b_{j}\sin(2 \pi j \phi)\right)\\
    \sum_{j=-F}^{F} -2\pi j \left(\beta_{j}\sin(2 \pi j \phi)+b_{j}\cos(2 \pi j \phi) \right)\\
    0
    \end{bmatrix} \diff{\phi}
\end{equation}
The thermal interaction matrix $\mathbf L$ can now be easily derived by following procedures similar to the ones presented in the previous sections.
In Appendix B, we present a numerical study comparing the performance of this method with the standard discrete surface integral \cite{muneer2015finite}.

\section{CONTROLLER DESIGN}
\label{section:controller design}

\begin{problem}
Given a \emph{constant} temperature reference vector $\boldsymbol{\tau}^* = [{T}^{*1},\ldots,{T}^{*N}{]}^\T \in \mathbb{R}^{N}$, design a velocity-based motion controller $\mathbf{u}$ that asymptotically minimizes the feedback error $\Delta \boldsymbol{\tau}= \boldsymbol{\tau}-\boldsymbol{\tau}^*$ for all $N$ objects.
\end{problem} 

To solve this problem, in this section we propose two methods: a model-based controller (conceptually depicted in Fig.~\ref{fig: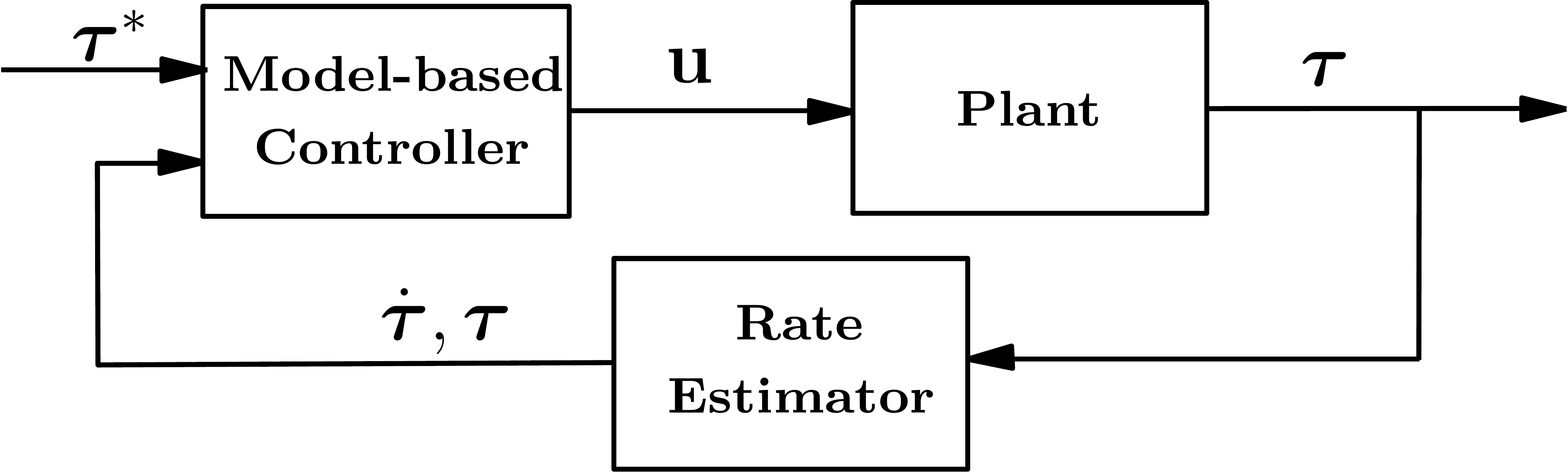}) and an adaptive controller (shown in Fig.~\ref{fig:adaptive_controller}). 
At the end of this section, we discuss target feasibility.

\subsection{Model-Based Controller}
\label{sec: Model-Based Controller}

\begin{figure}[t]
				\centering
				\includegraphics[width = 0.75 \columnwidth]{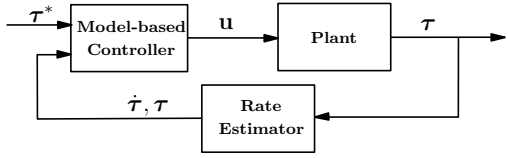}
				\caption{Schematic representation of the model-based controller.}
				\label{fig:model_based_controller.pdf}
\end{figure}

To automatically regulate the feedback temperature of $N$ objects, we design the following velocity control input:
\begin{equation}
\mathbf{u}=\mathbf{L}^{+}(-D\mathbf{v}-K\Delta \boldsymbol{\tau}+4{\mathbf T} \boldsymbol{\Lambda}\mathbf{v})
\label{control input}
\end{equation}
where $\mathbf{L}^{+}=\mathbf{L}^\T\left(\mathbf{L} \mathbf{L}^\T\right)^{-1}$ is the Moore-Penrose pseudoinverse of $\mathbf{L}$ \cite{Book:Nakamura1991}, and $D>0$ and $K>0$ are control gains. 

\begin{prop}
Consider that thermodynamic parameters in \eqref{eq:multiple object dynamic system} are accurately known.
For this situation, the control input \eqref{control input} enforces a stable closed-loop system which asymptotically minimizes $\|\Delta \boldsymbol{\tau}\|$.
\end{prop}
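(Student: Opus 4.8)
The plan is to substitute the control law \eqref{control input} into the closed-loop dynamics \eqref{eq:multiple object dynamic system} and show that the resulting error system is a linear, exponentially stable second-order system. First I would note that, since $N\le n$ and $\mathbf L$ is assumed to have full row rank, the matrix $\mathbf L\mathbf L^\T$ is invertible, so the pseudoinverse acts as a right inverse, $\mathbf L\mathbf L^+=\mathbf I_N$. Substituting \eqref{control input} into \eqref{eq:multiple object dynamic system} then gives
\begin{equation}
\dot{\mathbf v}=\mathbf L\mathbf L^+(-D\mathbf v-K\Delta\boldsymbol\tau+4\mathbf T\boldsymbol\Lambda\mathbf v)-4\mathbf T\boldsymbol\Lambda\mathbf v=-D\mathbf v-K\Delta\boldsymbol\tau,
\end{equation}
where the nonlinear radiative term $4\mathbf T\boldsymbol\Lambda\mathbf v$ cancels exactly because the thermodynamic parameters are known and are fed forward through the controller.

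Next I would exploit the fact that $\mathbf v=\dot{\boldsymbol\tau}=\dot{\Delta\boldsymbol\tau}$ (as $\boldsymbol\tau^*$ is constant), so the closed loop becomes the decoupled second-order LTI system $\ddot{\Delta\boldsymbol\tau}+D\dot{\Delta\boldsymbol\tau}+K\Delta\boldsymbol\tau=\mathbf 0$. With $D>0$ and $K>0$ the characteristic roots $\tfrac12(-D\pm\sqrt{D^2-4K})$ have strictly negative real part, giving exponential convergence of each error component. Equivalently, I would certify stability with Lyapunov theory by choosing the candidate
\begin{equation}
V=\tfrac12\mathbf v^\T\mathbf v+\tfrac12 K\,\Delta\boldsymbol\tau^\T\Delta\boldsymbol\tau\ge 0,
\end{equation}
whose derivative along the closed-loop trajectories reduces, after the cross terms cancel, to $\dot V=-D\|\mathbf v\|^2\le 0$.

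Since $\dot V$ is only negative semidefinite, I would finish with LaSalle's invariance principle: on the set $\{\dot V=0\}=\{\mathbf v=\mathbf 0\}$, the closed-loop equation forces $\dot{\mathbf v}=-K\Delta\boldsymbol\tau=\mathbf 0$, so invariance requires $\Delta\boldsymbol\tau=\mathbf 0$. Hence the largest invariant set is the origin and $\|\Delta\boldsymbol\tau\|\to 0$ asymptotically, which proves the claim.

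The main obstacle I anticipate is not the Lyapunov bookkeeping, which is routine, but justifying the well-posedness of the controller, namely that $\mathbf L\mathbf L^\T$ is invertible so that $\mathbf L\mathbf L^+=\mathbf I_N$ and the exact cancellation of $4\mathbf T\boldsymbol\Lambda\mathbf v$ goes through. This is precisely the full-row-rank (nonsingular configuration) requirement implicit in the definition of $\mathbf L^+$; away from such singular configurations the argument is clean, and the hypothesis that the parameters are known exactly guarantees the feedforward term matches the plant so that the error dynamics are genuinely linear.
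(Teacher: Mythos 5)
Your proof is correct, and its core is exactly the paper's argument: substituting \eqref{control input} into \eqref{eq:multiple object dynamic system} cancels the radiative term and yields the linear error dynamics $\dot{\mathbf v}=-D\mathbf v-K\Delta\boldsymbol\tau$, after which the paper uses the very same Lyapunov function $\mathcal Q=\tfrac12\|\mathbf v\|^2+\tfrac12 K\|\Delta\boldsymbol\tau\|^2$, obtains $\dot{\mathcal Q}=-D\|\mathbf v\|^2\le 0$, and invokes the Krasovskii--LaSalle principle. You add two items the paper leaves implicit, and both are worthwhile. First, the well-posedness caveat that $\mathbf L$ must have full row rank so that $\mathbf L\mathbf L^{+}=\mathbf I_N$: the paper's definition $\mathbf{L}^{+}=\mathbf{L}^\T(\mathbf{L}\mathbf{L}^\T)^{-1}$ silently assumes this, and without it the exact cancellation of $4\mathbf T\boldsymbol\Lambda\mathbf v$ fails, so flagging it as a standing (nonsingular-configuration) hypothesis is a genuine improvement. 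Second, the observation that the closed loop is a decoupled second-order LTI system $\ddot{\Delta\boldsymbol\tau}+D\,\dot{\Delta\boldsymbol\tau}+K\Delta\boldsymbol\tau=\mathbf 0$, whose characteristic roots have strictly negative real parts; this is more elementary than the Lyapunov route, buys \emph{exponential} (not merely asymptotic) convergence, makes the LaSalle step optional, and incidentally explains the paper's later mass-spring-damper interpretation of the gains $K$ and $D$ (over- vs.\ under-damped responses) in the experimental section. You also carry out the invariance-set computation ($\mathbf v\equiv\mathbf 0$ forces $\dot{\mathbf v}=-K\Delta\boldsymbol\tau=\mathbf 0$, hence $\Delta\boldsymbol\tau=\mathbf 0$) that the paper only cites rather than performs.
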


\begin{proof}
Substituting \eqref{control input} into the nonlinear dynamic system \eqref{eq:multiple object dynamic system}, yields the following closed-loop system:
\begin{equation}
\label{eq:dot_v}
    \dot{\mathbf{v}}=-D\mathbf{v}-K\Delta \boldsymbol{\tau}.
\end{equation}
Consider the quadratic Lyapunov function
\begin{equation}
\mathcal Q (\mathbf v,\Delta\boldsymbol\tau) = \frac{1}{2} \| \mathbf{v} \|^2 +\frac{1}{2}K \|\Delta \boldsymbol{\tau} \|^2
\end{equation}
whose time derivative along trajectories of \eqref{eq:dot_v} yields
\begin{equation}
\begin{split}
    \label{eq:mb controller step 1}
    \dot{\mathcal Q}(\mathbf v,\Delta\boldsymbol\tau) &= \mathbf{v}^\T\dot{\mathbf{v}}+K\Delta\boldsymbol\tau^\T {\mathbf{v}} \\
    &=-D\|\mathbf{v}\|^2
\end{split}
\end{equation}
which shows that the energy function is non-increasing, i.e. $\dot {\mathcal Q} \le 0$, thus, the closed-loop system is stable.
By applying the Krasovskii-LaSalle principle \cite{vidyasagar2002nonlinear}, the asymptotic minimization of $\|\Delta\boldsymbol\tau\|$ can be proved.
\end{proof}

\begin{remark}
In our proposed method, the terms $\mathbf T$ and $\boldsymbol{\tau}$ in the controller \eqref{control input} can be directly obtained from real-time sensor measurements. 
Yet, to implement the variable $\mathbf{v}$, we use a rate estimation algorithm based on polynomial fitting with sliding windows \cite{titchener2015calculation} (see Appendix C for details). 
\end{remark}

\subsection{Adaptive Controller}
\label{section:adaptive controller}
In the above model-based controller, we assume that the object's thermophysical properties are exactly known. 
However, due to the differences in material and surface conditions, it is hard to determine the true values. 
In this subsection, we propose an adaptive controller which updates the unknown parameters within the servo-loop. 
To this end, we start by introducing the unknown parameters $a_{1}=\frac{1}{\lambda_{1}}$ and ${a}_{2} = \frac{\lambda_{2}}{\lambda_{1}}$, which are well-defined since $\lambda_{1} > 0$; We use the superscripts ${a}_{1}^{i}$ and ${a}_{2}^{i}$ to distinguish them between different objects. 
With these parameters, we construct the following \emph{constant} vector $\mathbf a_{1,2}\in\mathbb R^N$ and matrix $\mathbf A_{1,2}\in\mathbb R^{N\times N}$ structures:
\begin{align}
\mathbf{a}_{1}&= \begin{bmatrix} a_{1}^{1} & \cdots & a_{1}^N \end{bmatrix}^\T,
\qquad \mathbf{A}_{1} =\operatorname{diag}(\mathbf a_1)>0, \nonumber \\
\mathbf{a}_{2} &= \begin{bmatrix} a_{2}^{1} & \cdots & a_{2}^N \end{bmatrix}^\T, 
\qquad\mathbf{A}_{2}=\operatorname{diag}(\mathbf a_2)>0.
\end{align}
By applying the dynamic expression \eqref{eq:dynamic system} to the multi-object case and dividing it by $\lambda_{1}^{i}$ for each $i$th object, we obtain:
\begin{equation}
    \mathbf{A}_{1}\dot{\mathbf{v}}+4 \mathbf T {\mathbf{A}}_{2}\mathbf{v}=\mathbf{J}\mathbf{u}
    \label{eq:new dynamic equation}
\end{equation}
for an new interaction matrix
\begin{equation}
    \mathbf{J}=\begin{bmatrix}
    \mathbf{l}_{1}/\lambda_{1}^{1}&\mathbf{l}_{2}/\lambda_{1}^{2}&\cdots&\mathbf{l}_{N}/\lambda_{1}^{N}
    \end{bmatrix}^\T \in \mathbb{R}^{N \times 6}
\end{equation}
which is independent from the unknown thermophysical parameters\footnote{An analogous result to the depth-independent interaction matrix in \cite{Journals:Liu2006}} and is entirely computed with the $N$ gradients of the view factors. 
To design the adaptive controller, it is useful to introduce the combined thermal error vector $\boldsymbol{\zeta}= [\zeta^{1},\ldots,\zeta^{N}{]}^\T\in\mathbb R^N$ defined as:
\begin{equation}
  \boldsymbol{\zeta}= \Delta \dot{\boldsymbol{\tau}}+\mu\Delta \boldsymbol{\tau}=\mathbf{v}+\mu \Delta \boldsymbol{\tau}  
\end{equation}
for $\mu>0$ as an arbitrary feedback gain. 
To control the $N$ object temperatures, we design the following velocity input:
\begin{equation}
\label{eq:adaptive control input}
    \mathbf{u}=\mathbf{J}^{+}(-\mu\widehat{\mathbf{A}}_{1}\mathbf{v}-K\boldsymbol{\zeta}+4 \mathbf T \widehat{\mathbf{A}}_{2}\mathbf{v})
\end{equation}
where the elements of the adaptive diagonal matrices $\widehat{\mathbf A}_i = \operatorname{diag}(\widehat{\mathbf a}_i)\in\mathbb R^{N\times N}$ are computed with the update rules:
\begin{align}
\dot{\widehat{\mathbf{a}}}_1 &= \gamma_{1}\mu\begin{bmatrix}
v^{1}\zeta^{1} & \ldots & v^{N}\zeta^{N}
\end{bmatrix}^{\T} \in\mathbb R^N
\label{update rule 1}\\
\dot{\widehat{\mathbf{a}}}_2 &= -4\gamma_{2}\begin{bmatrix}
v^1\zeta^{1}(T_{2}^{1})^3 & \ldots & v^N\zeta^{N}(T_{2}^{N})^3
\end{bmatrix}^{\T} \in\mathbb R^N
\label{update rule 2}
\end{align}
where the positive scalars $\gamma_{i}>0$ are used for tuning algorithm's learning rate.

\begin{figure}[t]
                \centering
				\includegraphics[width = 0.95 \columnwidth]{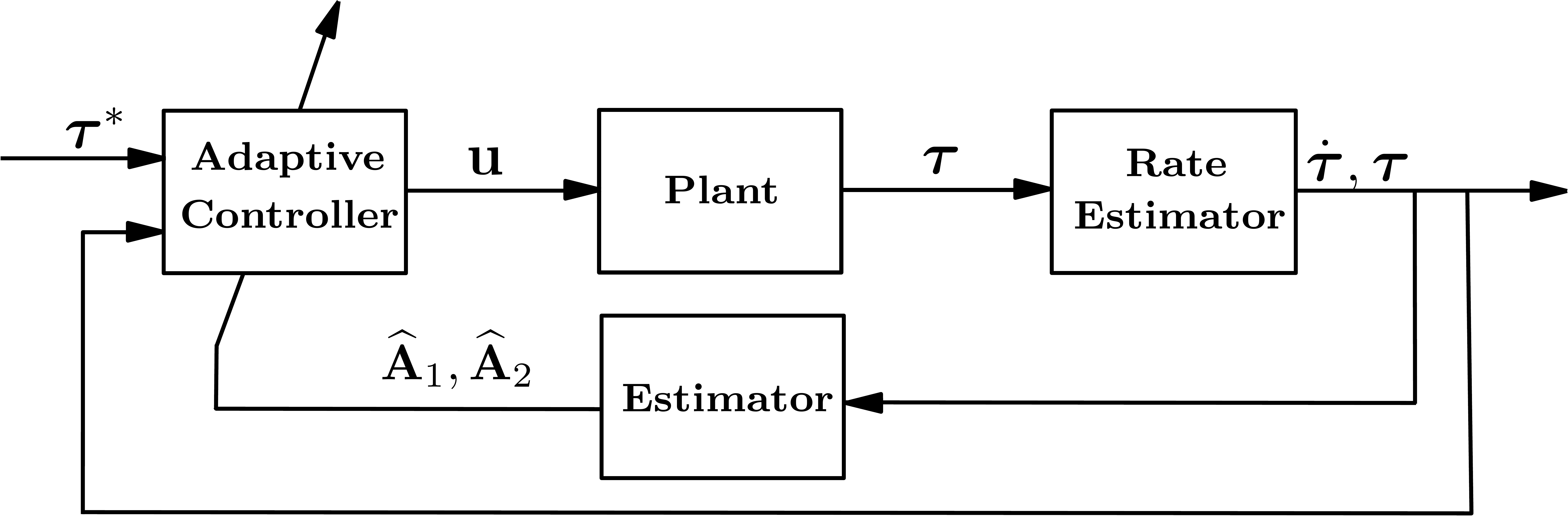}
				\caption{Schematic representation of the adaptive controller.}
				\label{fig:adaptive_controller}
\end{figure}

\begin{prop}
The adaptive controller \eqref{eq:adaptive control input} with update rules \eqref{update rule 1}--\eqref{update rule 2} guarantees a bounded estimation of the unknown parameters $\mathbf a_{1}$ and $\mathbf a_{2}$, and  the asymptotic minimization of the thermal error $\|\Delta \boldsymbol{\tau}\|$.
\end{prop}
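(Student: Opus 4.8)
The plan is to convert the closed loop into an error system driven purely by parameter mismatch, and then to annihilate that mismatch with a Lyapunov function weighted by the (unknown but constant) true parameters. First I would substitute the velocity input \eqref{eq:adaptive control input} into the reparametrised dynamics \eqref{eq:new dynamic equation}. Under the standing assumption that $\mathbf{J}$ has full row rank, so that $\mathbf{J}\mathbf{J}^{+}=\mathbf{I}$, the right-hand side collapses to $-\mu\widehat{\mathbf{A}}_{1}\mathbf{v}-K\boldsymbol{\zeta}+4\mathbf{T}\widehat{\mathbf{A}}_{2}\mathbf{v}$. Introducing the parameter errors $\widetilde{\mathbf{a}}_{i}=\mathbf{a}_{i}-\widehat{\mathbf{a}}_{i}$ with $\widetilde{\mathbf{A}}_{i}=\operatorname{diag}(\widetilde{\mathbf{a}}_{i})$, and using $\dot{\mathbf{v}}=\dot{\boldsymbol{\zeta}}-\mu\mathbf{v}$ (which follows from $\boldsymbol{\zeta}=\mathbf{v}+\mu\Delta\boldsymbol{\tau}$ and $\Delta\dot{\boldsymbol{\tau}}=\mathbf{v}$), I would arrive at the error dynamics
\[
\mathbf{A}_{1}\dot{\boldsymbol{\zeta}}=\mu\widetilde{\mathbf{A}}_{1}\mathbf{v}-4\mathbf{T}\widetilde{\mathbf{A}}_{2}\mathbf{v}-K\boldsymbol{\zeta}.
\]

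Then I would propose the Lyapunov candidate
\[
V=\tfrac{1}{2}\boldsymbol{\zeta}^{\T}\mathbf{A}_{1}\boldsymbol{\zeta}+\tfrac{1}{2\gamma_{1}}\widetilde{\mathbf{a}}_{1}^{\T}\widetilde{\mathbf{a}}_{1}+\tfrac{1}{2\gamma_{2}}\widetilde{\mathbf{a}}_{2}^{\T}\widetilde{\mathbf{a}}_{2},
\]
which is positive definite because $\mathbf{A}_{1}>0$ and $\gamma_{i}>0$. Differentiating along the error dynamics, and exploiting that $\mathbf{A}_{1}$ and $\mathbf{a}_{i}$ are constant (so $\dot{\widetilde{\mathbf{a}}}_{i}=-\dot{\widehat{\mathbf{a}}}_{i}$), the cross term $\boldsymbol{\zeta}^{\T}\mathbf{A}_{1}\dot{\boldsymbol{\zeta}}$ produces bilinear expressions in $\widetilde{\mathbf{a}}_{i}$ which, since every matrix involved is diagonal, rewrite as $\mu\widetilde{\mathbf{a}}_{1}^{\T}[\,\zeta^{1}v^{1},\ldots,\zeta^{N}v^{N}\,]^{\T}$ and $-4\widetilde{\mathbf{a}}_{2}^{\T}[\,\zeta^{1}v^{1}(T_{2}^{1})^{3},\ldots\,]^{\T}$. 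These are engineered to be cancelled exactly by the update rules \eqref{update rule 1}--\eqref{update rule 2}, leaving
\[
\dot{V}=-K\|\boldsymbol{\zeta}\|^{2}\le 0.
\]
From $\dot{V}\le0$ I immediately conclude that $\boldsymbol{\zeta}$, $\widetilde{\mathbf{a}}_{1}$ and $\widetilde{\mathbf{a}}_{2}$ are bounded; since $\mathbf{a}_{i}$ are constants, this proves the boundedness of the estimates $\widehat{\mathbf{a}}_{i}$, the first claim of the proposition.

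For the asymptotic claim I would invoke Barbalat's lemma. As $V$ is lower bounded and non-increasing it converges, so integrating $\dot{V}$ shows $\boldsymbol{\zeta}\in L_{2}$. The definition $\boldsymbol{\zeta}=\Delta\dot{\boldsymbol{\tau}}+\mu\Delta\boldsymbol{\tau}$ is a strictly stable first-order filter ($\mu>0$); driving it with the bounded, square-integrable signal $\boldsymbol{\zeta}$ shows that $\Delta\boldsymbol{\tau}$, and hence $\mathbf{v}=\boldsymbol{\zeta}-\mu\Delta\boldsymbol{\tau}$, are bounded, and therefore $\mathbf{T}$ (a function of the bounded temperatures) is bounded. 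With these bounds the right-hand side of the error dynamics is bounded, so $\dot{\boldsymbol{\zeta}}$ is bounded and $\boldsymbol{\zeta}$ is uniformly continuous; Barbalat's lemma then gives $\boldsymbol{\zeta}\to\mathbf{0}$. Passing $\boldsymbol{\zeta}\to\mathbf{0}$ through the same stable filter yields $\Delta\boldsymbol{\tau}\to\mathbf{0}$, i.e. $\|\Delta\boldsymbol{\tau}\|\to0$.

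I expect the main obstacle to be the signal-boundedness bookkeeping required to legitimately apply Barbalat's lemma: the Lyapunov step by itself bounds only the composite error $\boldsymbol{\zeta}$, not $\mathbf{v}$, $\Delta\boldsymbol{\tau}$ or $\mathbf{T}$ individually, so I must lean on the stable-filter structure linking $\boldsymbol{\zeta}$ to $\Delta\boldsymbol{\tau}$ to propagate boundedness and to establish uniform continuity of $\boldsymbol{\zeta}$. A secondary point worth flagging is the full-row-rank assumption on $\mathbf{J}$ along the trajectory, which is precisely what makes $\mathbf{J}\mathbf{J}^{+}=\mathbf{I}$ and the exact term cancellation valid.
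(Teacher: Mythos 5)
Your proposal is correct, and its core is identical to the paper's own proof: you derive the same error dynamics $\mathbf{A}_{1}\dot{\boldsymbol{\zeta}}+K\boldsymbol{\zeta}=$ (parameter-mismatch terms)$\,\cdot\mathbf{v}$ (modulo your opposite sign convention for $\widetilde{\mathbf{a}}_{i}$, which is immaterial), use the same Lyapunov function $\frac{1}{2}\boldsymbol{\zeta}^{\T}\mathbf{A}_{1}\boldsymbol{\zeta}+\frac{1}{2\gamma_{1}}\|\widetilde{\mathbf{a}}_{1}\|^{2}+\frac{1}{2\gamma_{2}}\|\widetilde{\mathbf{a}}_{2}\|^{2}$, and obtain the same cancellation yielding $\dot{V}=-K\|\boldsymbol{\zeta}\|^{2}$. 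Where you genuinely diverge is the final limiting argument: the paper concludes asymptotic minimization of $\|\Delta\boldsymbol{\tau}\|$ with a one-line appeal to the Krasovskii--LaSalle principle, whereas you invoke Barbalat's lemma, establishing $\boldsymbol{\zeta}\in L_{2}\cap L_{\infty}$, propagating boundedness through the stable filter $\Delta\dot{\boldsymbol{\tau}}=-\mu\Delta\boldsymbol{\tau}+\boldsymbol{\zeta}$ to get $\Delta\boldsymbol{\tau}$, $\mathbf{v}$ and $\mathbf{T}$ bounded, hence $\dot{\boldsymbol{\zeta}}$ bounded, hence $\boldsymbol{\zeta}\to\mathbf{0}$, and finally $\Delta\boldsymbol{\tau}\to\mathbf{0}$ through the same filter. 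Your route is arguably the more defensible one: the closed loop here is non-autonomous (the signals $\mathbf{v}$ and $\mathbf{T}$ enter as time-varying quantities), and LaSalle's invariance principle strictly applies to autonomous systems, so the Barbalat-based signal-chasing you carry out is exactly the bookkeeping the paper's terse conclusion leaves implicit. You also make explicit the full-row-rank assumption on $\mathbf{J}$ needed for $\mathbf{J}\mathbf{J}^{+}=\mathbf{I}$, which the paper uses silently when applying the pseudoinverse; flagging it is a genuine improvement, since the exact cancellation in the closed loop fails where $\mathbf{J}$ loses rank.
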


\begin{proof}
Substitution of \eqref{eq:adaptive control input} into \eqref{eq:new dynamic equation} yields:
\begin{equation}
    \mathbf{A}_{1}\dot{\mathbf{v}} + 4 \mathbf T {\mathbf{A}}_{2}\mathbf{v} = -\mu\widehat{\mathbf{A}}_{1}\mathbf{v}-K\boldsymbol{\zeta} + 4 \mathbf T \widehat{\mathbf{A}}_{2}\mathbf{v}
    \label{first_closed-loop_prop2}
\end{equation}
By adding $\mu\mathbf{A}_{1} \mathbf{v}$ to both sides of \eqref{first_closed-loop_prop2}, noting that $\dot{\boldsymbol{\zeta}}=\dot{\mathbf{v}}+\mu\mathbf{v}$, and performing some algebraic operations we can obtain:
\begin{equation}
\label{eq:new zeta}
    \mathbf{A}_{1}\dot{\boldsymbol{\zeta}}+K\boldsymbol{\zeta}=-\mu\widetilde{\mathbf{A}}_{1}\mathbf{v} + 4 \mathbf T \widetilde{\mathbf{A}}_{2}\mathbf{v}.
\end{equation}
for error matrices $\widetilde{\mathbf{A}}_{i} = \widehat{\mathbf A}_i - \mathbf A_i = \operatorname{diag}(\widetilde{\mathbf a}_i)$, with error vectors $\widetilde{\mathbf{a}}_{i} = \widehat{\mathbf a}_i - \mathbf a_i$.
To analyze the stability of the closed-loop dynamical system \eqref{update rule 1}--\eqref{update rule 2} and \eqref{eq:new zeta}, we introduce the following Lyapunov function:
\begin{equation}
    \mathcal H(\boldsymbol\zeta,\widetilde{\mathbf a}_1,\widetilde{\mathbf a}_2) = \frac{1}{2} \boldsymbol{\zeta}^\T\mathbf{A}_{1}\boldsymbol{\zeta} + \frac{1}{2\gamma_1} \|{\widetilde{\mathbf{a}}}_{1}\|^2 + \frac{1}{2\gamma_2} \|{\widetilde{\mathbf{a}}}_{2} \|^2
    \label{eq:adaptive lyapunov function}
\end{equation}
whose time derivative along \eqref{update rule 1}--\eqref{update rule 2} and \eqref{eq:new zeta} yields
\begin{equation}
\begin{split}
\dot{\mathcal H}(\boldsymbol\zeta,\widetilde{\mathbf a}_1,\widetilde{\mathbf a}_2) &= \boldsymbol{\zeta}^{\T}\mathbf{A}_{1}\dot{\boldsymbol{\zeta}}+\frac{1}{\gamma_{1}}\dot{\widehat{\mathbf{a}}}_{1}^{\T}\widetilde{\mathbf{a}}_{1}+\frac{1}{\gamma_{2}}\dot{\widehat{\mathbf{a}}}_{2}^{\T}\widetilde{\mathbf{a}}_{2}\\
&= - K \|\boldsymbol\zeta\|^2
\label{eq:derive dotH}
\end{split}
\end{equation}
which shows that the energy function is non-increasing, i.e. $\dot{\mathcal H}\le 0$, thus, the parameter estimation errors $\widetilde{\mathbf a}_i$ are bounded.
Asymptotic stability of $\Delta\boldsymbol\tau$ directly follows by applying the Krasovskii-LaSalle principle \cite{vidyasagar2002nonlinear}.
\end{proof}

\subsection{Target Feasibility}
\label{section:Target Feasibility}
In previous sections, we proved that $\|\Delta \boldsymbol{\tau}\|$ can be asymptotically minimized by two automatic controllers. 
However, it is not guaranteed that such error can be enforced to zero. 
Failure cases are caused by the choice of unfeasible target temperatures: Intuitively, if targets are set to too high or too low, they might be physically unachievable; In addition, for objects fixed to the same end-effector, the difference range between their target temperatures is constrained by the fixed spatial relationship between the objects. 
In this section, we analyze two necessary but not sufficient conditions to ensure the feasibility of the targets. 
Failed experimental results are analyzed accordingly in Section \ref{sec: Result Unfeasible Thermal Targets}.

Consider a simple case with two objects, object 1 and object 2, fixed to the end-effector (the extension to $N$ object is straightforward). 
For one of the objects, recall the thermal-geometric relation $\eqref{eq:final temperature rate}$ and rewrite it as follows:
\begin{equation}
    \label{eq:v for T_{v0}}
    v=-\lambda_{2}{T_{2}}^4+\lambda_{1}{F}_{21}(\mathbf{x}_{o})+\lambda_{3}
\end{equation}
where $\mathbf{x}_{o}$ denotes an object configuration. 
Let us assume there exists a temperature $T_{2}=T_{v0}$ that makes the rate $v=0$. 
As temperature is always non-negative, $T_{v0}$ can be solved as:
\begin{equation}
\label{eq:steady state temperature}
    T_{v0}(F_{21})= \left((\lambda_{1}{F}_{21}+\lambda_{3}) / \lambda_{2} \right)^{\frac{1}{4}}
\end{equation}
Since the parameters $\lambda_{i}>0$ are all positive and $F_{21} \in [0,1)$, $T_{v0}$ always exists. 
$T_{v0}$ represents the steady state temperature at $\mathbf{x}_{o}$. 
Note that $T_{v0}$ is a function of $F_{21}$ and that $\partial T_{v0} / \partial F_{21}>0$ is always positive. 
Thus, the minimum value of $T_{v0}$ is determined when $F_{21}=0$ as:
\begin{equation}
    \operatorname{min}(T_{v0})=(\lambda_{3}/\lambda_{2})^{\frac{1}{4}}=(\alpha_{2}T_{3}^{4} / \varepsilon_{2})^{\frac{1}{4}}
\end{equation}
According to Kirchhoff's law of thermal radiation \cite{Fundamentals_of_heat_and_mass_transfer}, at thermodynamic equilibrium, $\alpha_{2}=\varepsilon_{2}$. 
Thus, the minimum is:
\begin{equation}
    \label{eq:Tv0_min}
    \operatorname{min}(T_{v0})=T_{3}
\end{equation}
When $F_{21}(\mathbf{x}_{o}) \to 1$, the maximum value of $T_{v0}$ approaches:
\begin{equation}
    \label{eq:Tv0_max}
    \operatorname{max}(T_{v0}) \to ( (\lambda_{1}+\lambda_{3}) / \lambda_{2} )^{\frac{1}{4}} = (\alpha_{2}\varepsilon_{1}T_{1}^{4}/\varepsilon_{2})^{\frac{1}{4}}=\varepsilon_{1}^{\frac{1}{4}}T_{1}
\end{equation}
From \eqref{eq:Tv0_min}--\eqref{eq:Tv0_max}, we derive the \emph{first} boundary value condition:
\begin{equation}
    T^{1*}, T^{2*}\in [T_{3},\varepsilon_{1}^{\frac{1}{4}}T_{1})
\end{equation}

Now we discuss the limitation of the difference between target temperatures $|\delta T^{*}|=|T^{1*}-T^{2*}|$.
We denote the configuration of the object 1 and object 2 by $\mathbf{x}_{o1}=\mathbf{x}+\Delta \mathbf{x}_{1}$ and $\mathbf{x}_{o2}=\mathbf{x}+\Delta \mathbf{x}_{2}$, respectively, where $\Delta \mathbf{x}_{1},\Delta \mathbf{x}_{2}$ are constant displacement vectors determined by the arrangement of objects.
Their corresponding view factor are denoted by $F_{21}(\mathbf{x}_{o1})$ and $F_{21}(\mathbf{x}_{o2})$, and its steady-state temperatures by $T_{v0}^{1}$ and $T_{v0}^{2}$. 
According to $\eqref{eq:steady state temperature}$,  $|\Delta T_{v0}|=|T_{v0}^{1}-T_{v0}^{2}|$ can be expressed as:
\begin{equation}
    |\Delta T_{v0}(\mathbf{x}_{o1},\mathbf{x}_{o2})|=|\chi_{1}(F_{21}(\mathbf{x}_{o1}))-\chi_{2}(F_{21}(\mathbf{x}_{o2}))|
\end{equation}
with functions $\chi_{1}(F_{21})$ and $\chi_{2}(F_{21})$ defined as:
\begin{equation}
    \chi_{1}(F_{21})= \left(\frac{\lambda_{1}^{1}{F}_{21}+\lambda_{3}^{1}}{\lambda_{2}^{1}}\right)^{\frac{1}{4}},~ \chi_{2}(F_{21})= \left(\frac{\lambda_{1}^{2}{F}_{21}+\lambda_{3}^{2}}{\lambda_{2}^{2}}\right)^{\frac{1}{4}}
\end{equation}
where $\lambda_{i}^{1}$ and $\lambda_{i}^{2}$ are the thermophysical parameters of the two objects.
Note that for the continuous function $\Delta T_{v0}(\mathbf{x}+\Delta \mathbf{x}_{1},\mathbf{x}+\Delta \mathbf{x}_{2})$, where $\mathbf{x}\in \mathbb W$ for $\mathbb{W}$ as the bounded workspace and $\Delta \mathbf{x}_{j}$ as constant vectors, there must exist a minimum value $\operatorname{min}(\Delta T_{v0})=\Delta T_{v0}(\mathbf{x}^{min})$ and a maximum value $\operatorname{max}(\Delta T_{v0})=\Delta T_{v0}(\mathbf{x}^{max})$ which encompass all possible values of $\Delta T_{v0}$, where $\mathbf{x}^{min}$ and $\mathbf{x}^{max}$ are the end-effector configurations corresponding to the two extreme cases. 
The \emph{second} condition for feasible target temperatures is:
\begin{equation}
    \delta T^{*} \in [\operatorname{min}(\Delta T_{v0}),\operatorname{max}(\Delta T_{v0})]
\end{equation}
A numerical (geometric) interpretation of $\mathbf{x}^{min}$ and $\mathbf{x}^{max}$ will be discussed in Section $\ref{sec: Result Unfeasible Thermal Targets}$.

\section{RESULTS}
\label{section:results}

\subsection{Experimental Setup}
\label{sec:set up}
\begin{figure}
				\centering
				\includegraphics[width =0.8 \columnwidth]{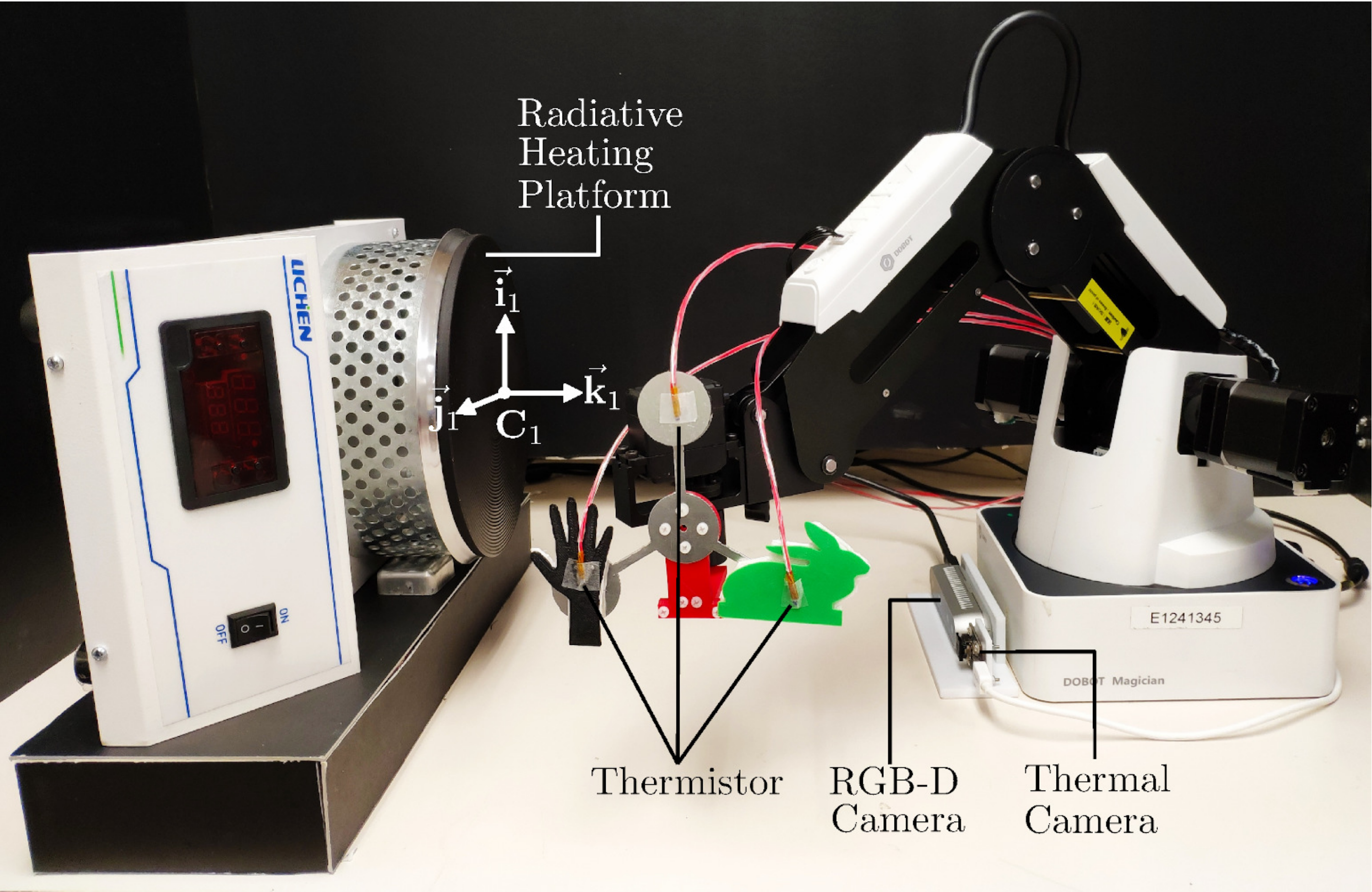}
				\caption{Experimental setup.}
				\label{fig: Set up}
\end{figure}

\begin{figure}
				\centering
				\includegraphics[width =1 \columnwidth]{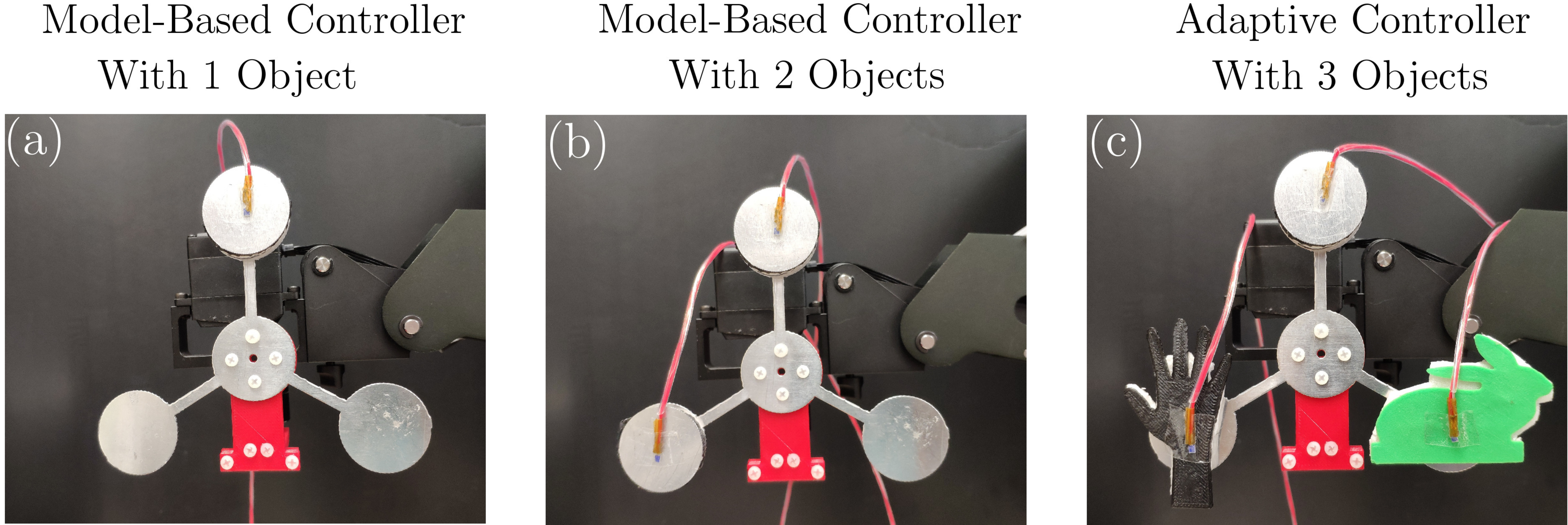}
				\caption{Different object used to test the two controllers.}	
				\label{fig:3-end-effector}
\end{figure}

\begin{figure}
				\centering
				\includegraphics[width =1 \columnwidth]{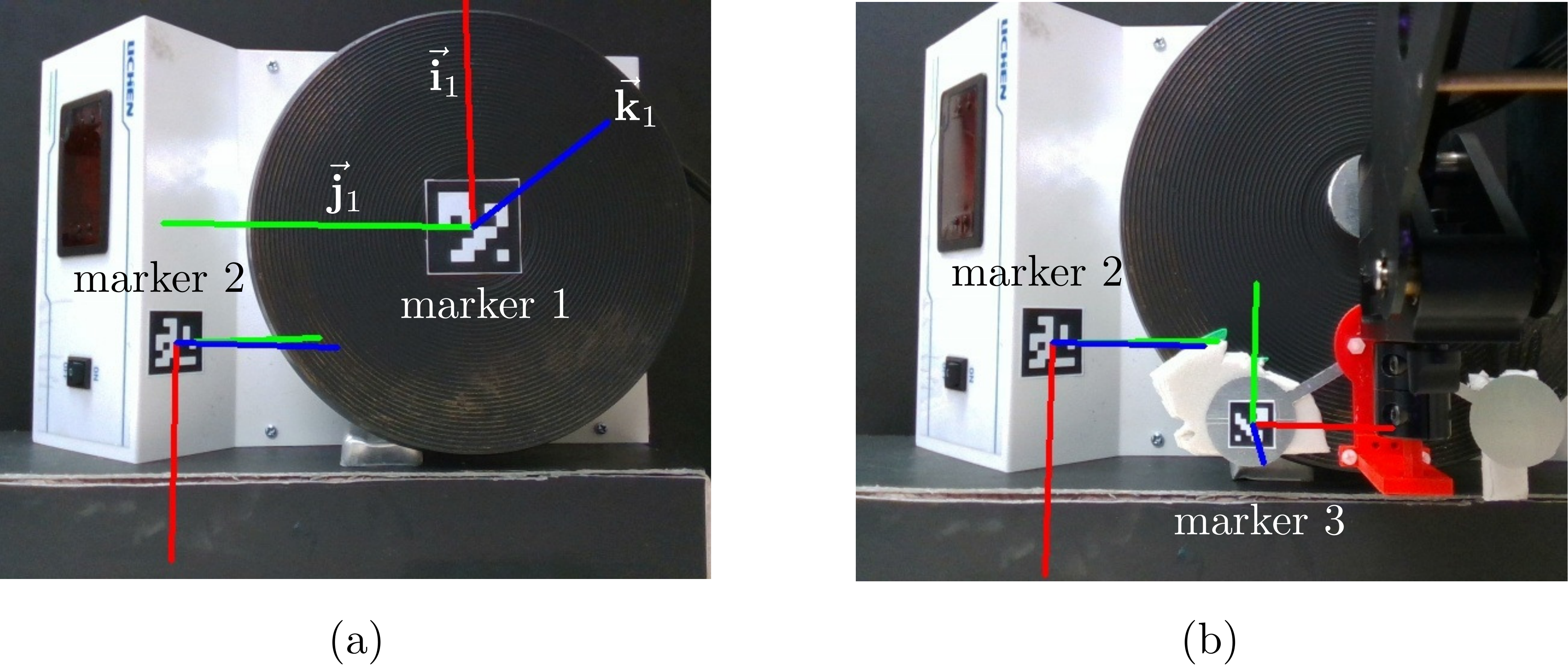}
				\caption{Calibration using ArUco markers before experiments.}
				\label{fig:calibration}
\end{figure}

We conducted a series of experiments on a 4-DOF robot (3 translations and 1 rotation) to evaluate the proposed method. 
Fig. \ref{fig: Set up} shows the robot, whose end-effector is replaced by a 3D printed connector fixed to an aluminum holder. 
The objects are attached to the holder through an adiabatic layer to minimize heat conduction. 
We prepared three different kinds of objects for temperature control experiments (see Fig. \ref{fig:3-end-effector}): An aluminum circular sheet with $\SI{1.5}{\cm}$ radius and $\SI{3}{\mm}$ thickness; A bunny-shaped object with $\SI{1}{\mm}$ thickness, 3D printed using polylactic acid (PLA) material with $30\%$ infill density; A hand-shaped sheet with $\SI{1}{\mm}$ thickness, also 3D printed using PLA but with $50\%$ infill density. 
We approximate the aluminum sheet's and the heat source's thermophysical properties via standard tables \cite{Fundamentals_of_heat_and_mass_transfer}. 
The object's emittance, absorptance, specific heat, and density are 0.04, 0.04, $\SI[inter-unit-product =\ensuremath{\cdot}]{903}{\joule\per\kelvin\per\kg}$, and $\SI[inter-unit-product =\ensuremath{\cdot}]{2702}{\kg\per\meter^{3}}$, respectively. 
The source's emittance and absorptance are estimated as 0.25 and 0.25.

For the two 3D printed objects, different infill densities, colors, and uncertain surface conditions make their thermophysical properties hard to be estimated. 
Thus, we only consider the aluminum sheet for the experiments with the model-based controller (adaptive control is used for the other objects).  
A radiative heating platform with adjustable temperature output is used as the heat source. 
The (indoor) environment temperature is assumed to be constant at $\SI{23}{\celsius}$.

To obtain the feedback temperatures, we attach a PT100 platinum thermistor with $\SI{0.3}{\celsius}$ accuracy and $\SI{0.1}{\celsius}$ precision to each object. 
The raw data obtained by thermistors is processed by a current-temperature transformation module and sent to a Linux-based control computer as the feedback signal. 
The motion command is calculated by the computer program and sent to the robot under a position-stepping mode.
At the beginning of the experiments, we use an RGB camera and three ArUco markers \cite{garrido2014automatic} to calibrate the configuration between the heat source and the end-effector (see Fig. \ref{fig:calibration}). 

\subsection{Experiments with the Model-Based Controller}
\begin{figure}
				\centering
				\includegraphics[width =1 \columnwidth]{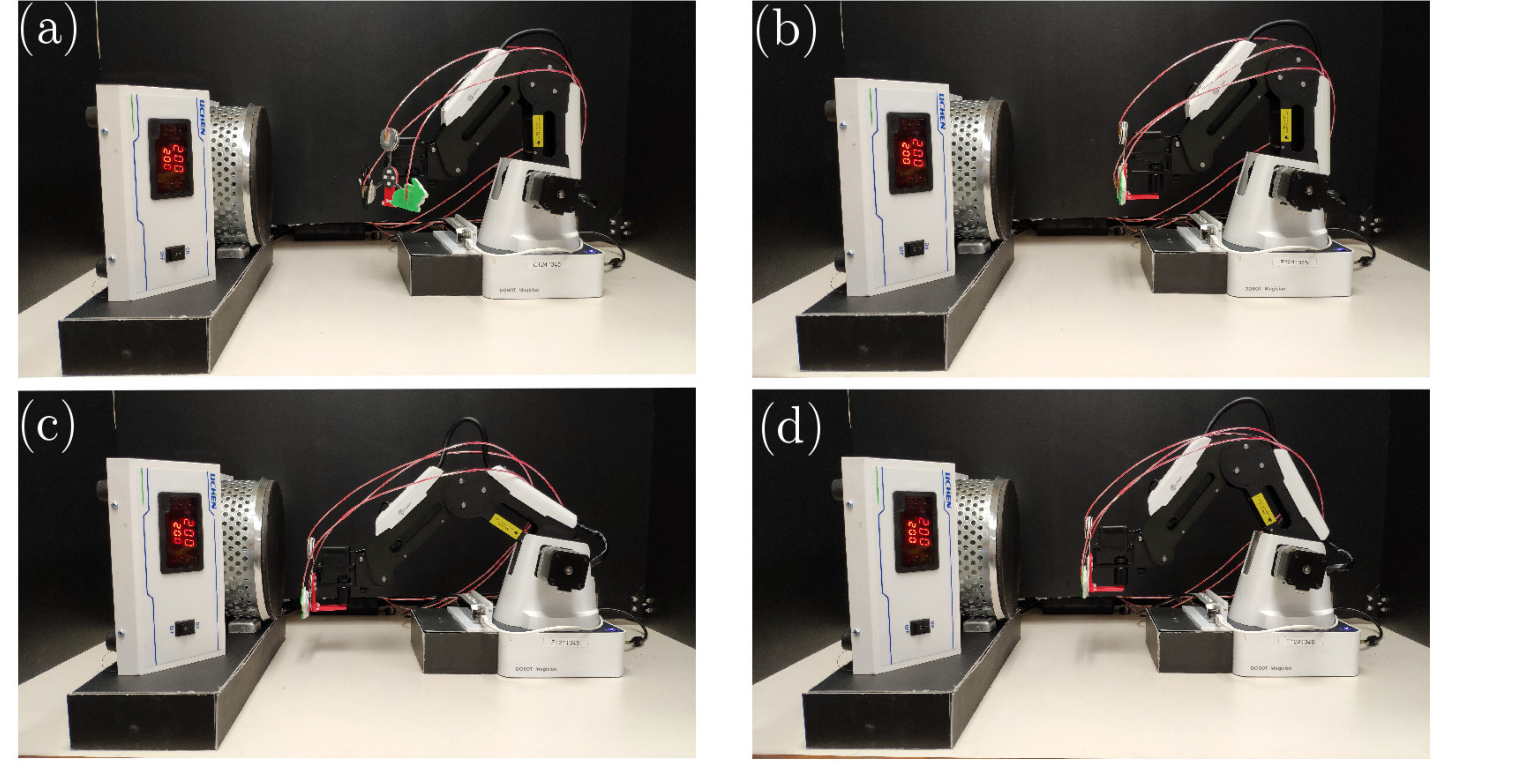}
				\caption{Snapshots of a representative thermal servoing experiment: (a) Initial position, (b)--(c) transient motion, and (d) steady-state configuration.}
				\label{fig:4-pic-porcess}
\end{figure}

\begin{figure}
				\centering
				\includegraphics[width =1 \columnwidth]{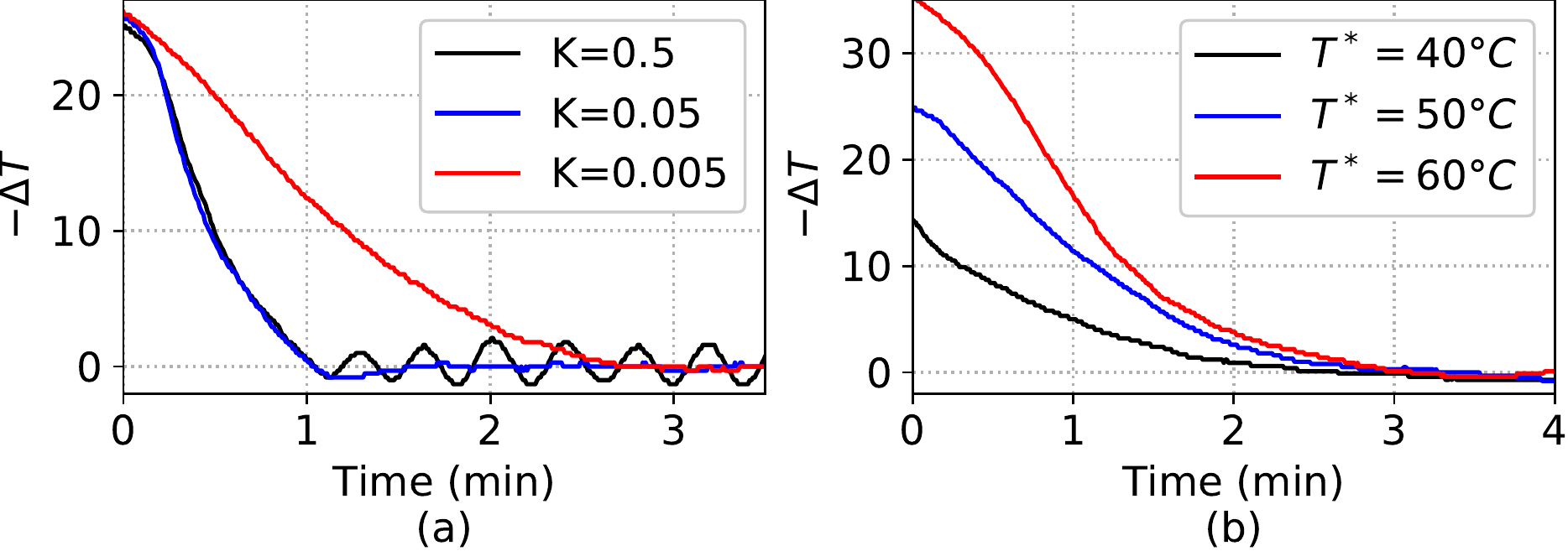}
				\caption{Evolution $\Delta T$ of the temperature error using one aluminum object with the model-based controller.}
				\label{fig:target-coefficient}
\end{figure}
\begin{figure}
				\centering
				\includegraphics[width = \columnwidth]{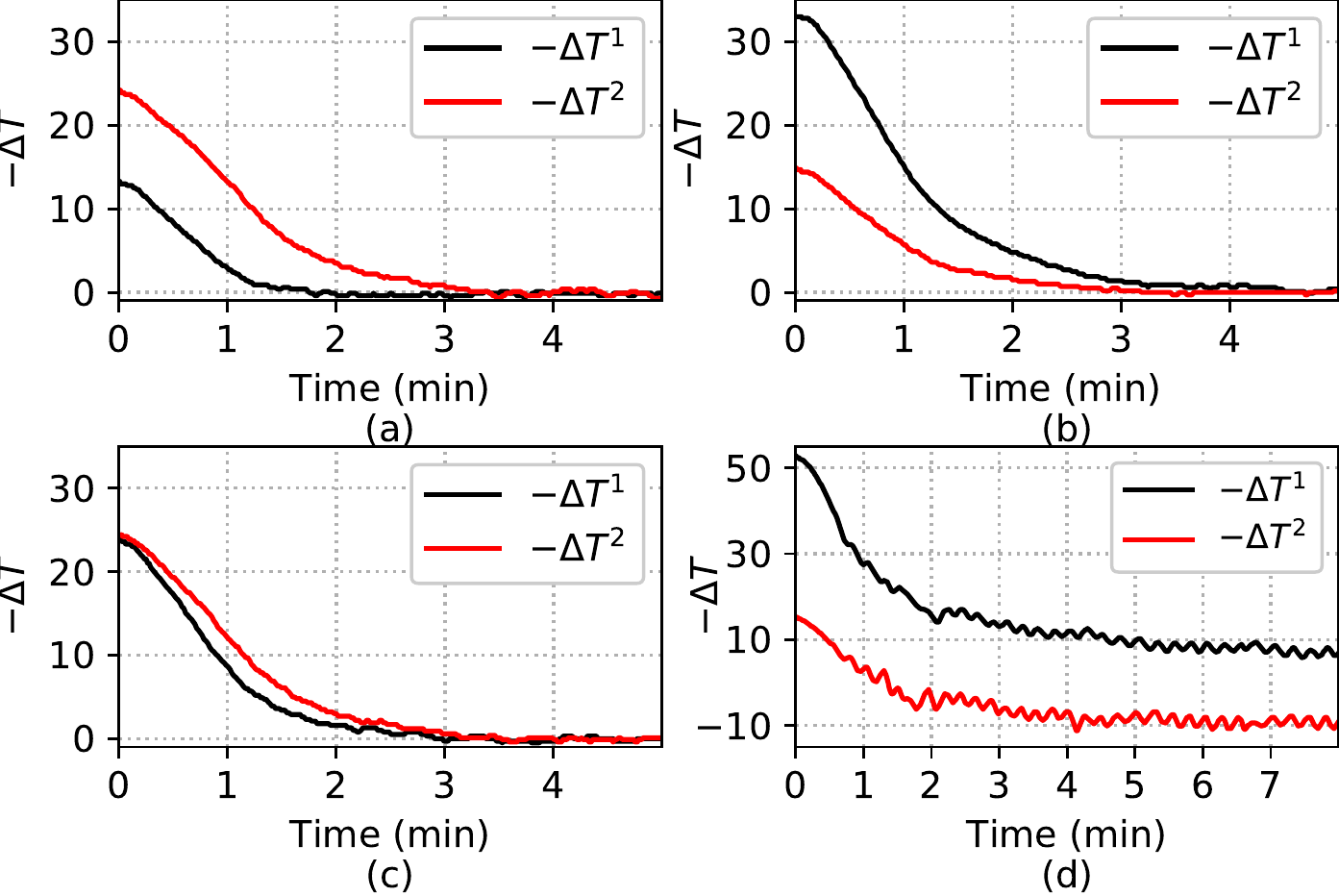}
				\caption{Evolution of the temperature errors ($\Delta T^{1}, \Delta T^{2}$) using two aluminum objects with the model-based controller.}
				\label{fig:two-objects}
\end{figure}

We conduct a series of thermal servoing experiments to evaluate our proposed control methodology (see Fig. \ref{fig:4-pic-porcess} for a representative experiment).
Here, we first evaluate the performance of the model-based controller with aluminum objects (whose properties are approximately known); The experiments are conducted with a source's temperature of \SI{200}{\celsius}. 

We denote the scalar temperature error by $\Delta T= T_{2}-T^{*}$. 
By using the controller \eqref{control input}, we enforce a closed-loop heat transfer system that resembles a mass-spring-damper system. 
Therefore, the values of the stiffness/damping-like gains $K$ and $D$ can be used to specify the system's performance. 
Fig. \ref{fig:target-coefficient} (a) demonstrates the effect of the gain $K$ on the thermal response. 
For that, we set $D=0.2$ and $T^{*}=\SI{50}{\celsius}$ and conduct three experiments with different $K$ values. 
These show that when $K=0.005$ (red curve), the error $\Delta T$ asymptotically decreases to zero with a relatively slow speed; When $K=0.05$ (blue curve), $\Delta T$ decreases faster and a small overshoot occurs; When $K= 0.5$ (black curve), $\Delta T$ oscillates near zero with an approximate $\SI{3}{\celsius}$ amplitude. 
This results shows how the closed-loop system varies from over-damped to under-damped. 
Thus, the gains should be specified according to the desired thermodynamic performance. 
We further conducted experiments with the same gains ($K=0.05$, $D=0.2$) but with different targets $T^*$ and found a consistent response (see Fig. $\ref{fig:target-coefficient}$ (b)). 
 
Model-based experiments were also conducted to independently regulate the temperatures of two aluminum objects, shown in Fig. \ref{fig:3-end-effector} (b). 
We designed 4 experiments with different targets $\boldsymbol{\tau}^*$ (measured in $\SI{}{\celsius}$). 
Fig. $\ref{fig:two-objects}$ depicts the minimization of the thermal errors for these 4 experiments, with target temperatures defined as $\boldsymbol{\tau}^{*} = [{50},{40}{]}^{\T}$, $\boldsymbol{\tau}^{*}=[{60},{40}{]}^{\T}$, $\boldsymbol{\tau}^{*} = [{50},{50}{]}^{\T}$, and $\boldsymbol{\tau}^{*} = [{80},{40}{]}^{\T}$ in (a), (b), (c), and (d), respectively. 
For the first three experiments where the differences between the target temperatures $|T^{*1}-T^{*2}|$ are small (or null), the thermal error $\|\Delta \boldsymbol{\tau}\|$ can be asymptotically minimized to zero.
However, when $|T^{*1}-T^{*2}|$ is large, as in Fig. \ref{fig:two-objects} (d), the two temperatures cannot be accurately controlled. 
This failure case can be explained by the second condition for feasible targets discussed in Section \ref{section:Target Feasibility}.

\subsection{Experiments with the Adaptive Controller}
We designed a series of experiments to evaluate the performance of the proposed adaptive controller. 
For that, we consider with three different objects (see Fig. \ref{fig:3-end-effector} (c)) with unknown thermophysical properties and irregular shapes.
To compute the interaction matrix, we use truncated Fourier series with 5 harmonics terms; This approach provides a fast calculation time with a ``good enough'' shape approximation.
The controller's gains are set to $\mu=0.05$ and $K=0.15$. 
To initialize the parameters $\widehat{\mathbf a}_i(0)$ at the time instance $t=0$, we use (for the ``hand'' and ``bunny'' objects) the constant values calculated for the aluminum object in the previous model-based controller, i.e. $\widehat{\mathbf a}_i(0)=\mathbf a_i$; For the circular object, we simply initialize $\widehat{\mathbf a}_i(0)$ with random values.

In this study, we report eight temperature control experiments with different targets, objects and source conditions. 
Figure \ref{fig:3 objects figure} shows the evolution of the individual thermal errors $\Delta T^i$. 
For ease of presentation, we name these eight experiments as $\textit{exp 1, \ldots, exp 8}$, and denote the corresponding target temperature for each experiment by $\boldsymbol{\tau}^{*1}, \ldots,\boldsymbol{\tau}^{*8}$. 
In \textit{exp 1}, we set the three target temperatures to the same value. 
In \textit{exp 2} -- \textit{exp 4}, we only set two targets to the same value. 
In \textit{exp 5} and \textit{exp 6}, we set all targets to different values, with a non-uniform thermal separated in \textit{exp 5} and a uniform one in \textit{exp 6}. 
In \textit{exp 7} and \textit{exp 8}, all targets are set to the same value, but with different heat source conditions. 
The source temperature $T_{1}$ is set to $\SI{200}{\celsius}$ in $\textit{exp 1}$ -- \textit{exp 6}, to $\SI{300}{\celsius}$ in $\textit{exp 7}$, and varies from $\SI{200}{\celsius}$ to $\SI{300}{\celsius}$ in $\textit{exp 8}$. 

In all these experiments with all these different conditions, the magnitude of the temperature error $\|\boldsymbol{\tau}\|$ asymptotically decreases to zero. 
Yet, failure control experiments do happen and are reported and discussed in Section \ref{sec: Result Unfeasible Thermal Targets}). 
The results experimentally confirm that (for \emph{feasible} target temperatures) the adaptive method is able to independently regulate temperatures of various objects with different shapes and materials, without exact knowledge of their thermophysical properties or the source's/environment's temperatures. 

Fig. \ref{fig:3 objects path} depicts the performed object trajectories during the experiments in Fig. \ref{fig:3 objects figure}.
The boundary of the circular heat source is depicted as a black circle (and ellipse). 
The color of a trajectory point represents the feedback temperature at that position; Variation from blue to red corresponds to a change from ``low'' to ``high''. 
For clarity, we depict two sets of trajectory visualizations from different viewing angles: For Fig. \ref{fig:3 objects path} $(a_{1}), (b_{1}),\ldots, (h_{1})$, the trajectories are viewed in $-\vec{\mathbf{k}}_{1}$ direction; For Fig. \ref{fig:3 objects path} $(a_{2}), (b_{2}),\ldots, (h_{2})$, the trajectories are viewed in $\vec{\mathbf{i}}_{1}$ direction. 

From these trajectory visualizations, we can see that when target temperatures are set to different values, the object with a higher target temperature usually reaches a position that is closer to the center of the heat source; This situation will be further discussed in the Section \ref{sec: visualization}. 
For the case when target temperatures are set to the same value, the final position of the circular aluminum sheet is always closer to the center of the heat source. 
This phenomenon could be explained by the fact that absorptance of a metal is usually much smaller than the absorptance of non-metallic materials (e.g. PLA) \cite{lienhard2005heat}. 

\subsection{Experiments with a Moving Heat Source}
\label{section: experiments with manual interference}

\begin{figure*} 
				\centering
				\includegraphics[width = 2 \columnwidth]{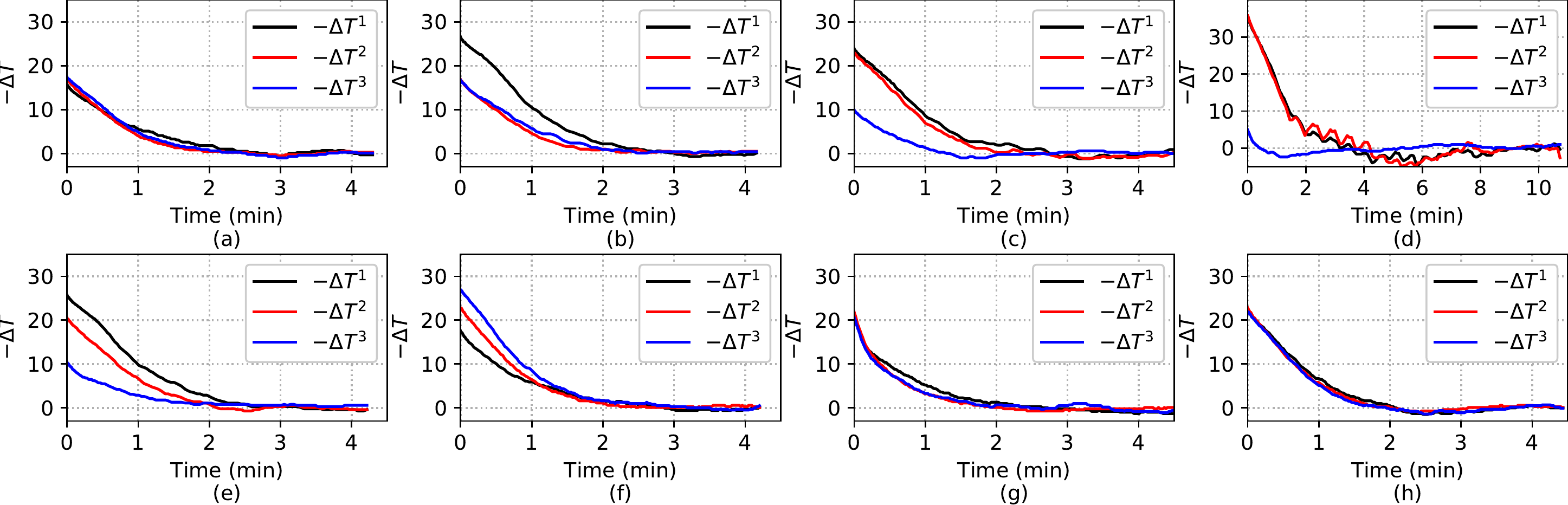}
				\caption{Evolution of the temperature errors of the three objects in the $8$ experiments with the adaptive controller,  $\Delta T^{1}$, $\Delta T^{2}$, and $\Delta T^{3}$ (measured in $\SI{}{\celsius}$). The target  temperatures are set as: $\boldsymbol{\tau}^{*1}=\begin{bmatrix}40&40&40\end{bmatrix}^{\T}$,  $\boldsymbol{\tau}^{*2}=\begin{bmatrix}50&40&40\end{bmatrix}^{\T}$,  $\boldsymbol{\tau}^{*3}=\begin{bmatrix}50&50&35\end{bmatrix}^{\T}$,  $\boldsymbol{\tau}^{*4}=\begin{bmatrix}60&60&30\end{bmatrix}^{\T}$,  $\boldsymbol{\tau}^{*5}=\begin{bmatrix}50&45&35\end{bmatrix}^{\T}$,  $\boldsymbol{\tau}^{*6}=\begin{bmatrix}40&45&50\end{bmatrix}^{\T}$,  $\boldsymbol{\tau}^{*7}=\begin{bmatrix}45&45&45\end{bmatrix}^{\T}$,  $\boldsymbol{\tau}^{*8}=\begin{bmatrix}45&45&45\end{bmatrix}^{\T}$.} 
				\label{fig:3 objects figure}
\end{figure*}
\begin{figure*}
				\centering
				\includegraphics[width = 2 \columnwidth]{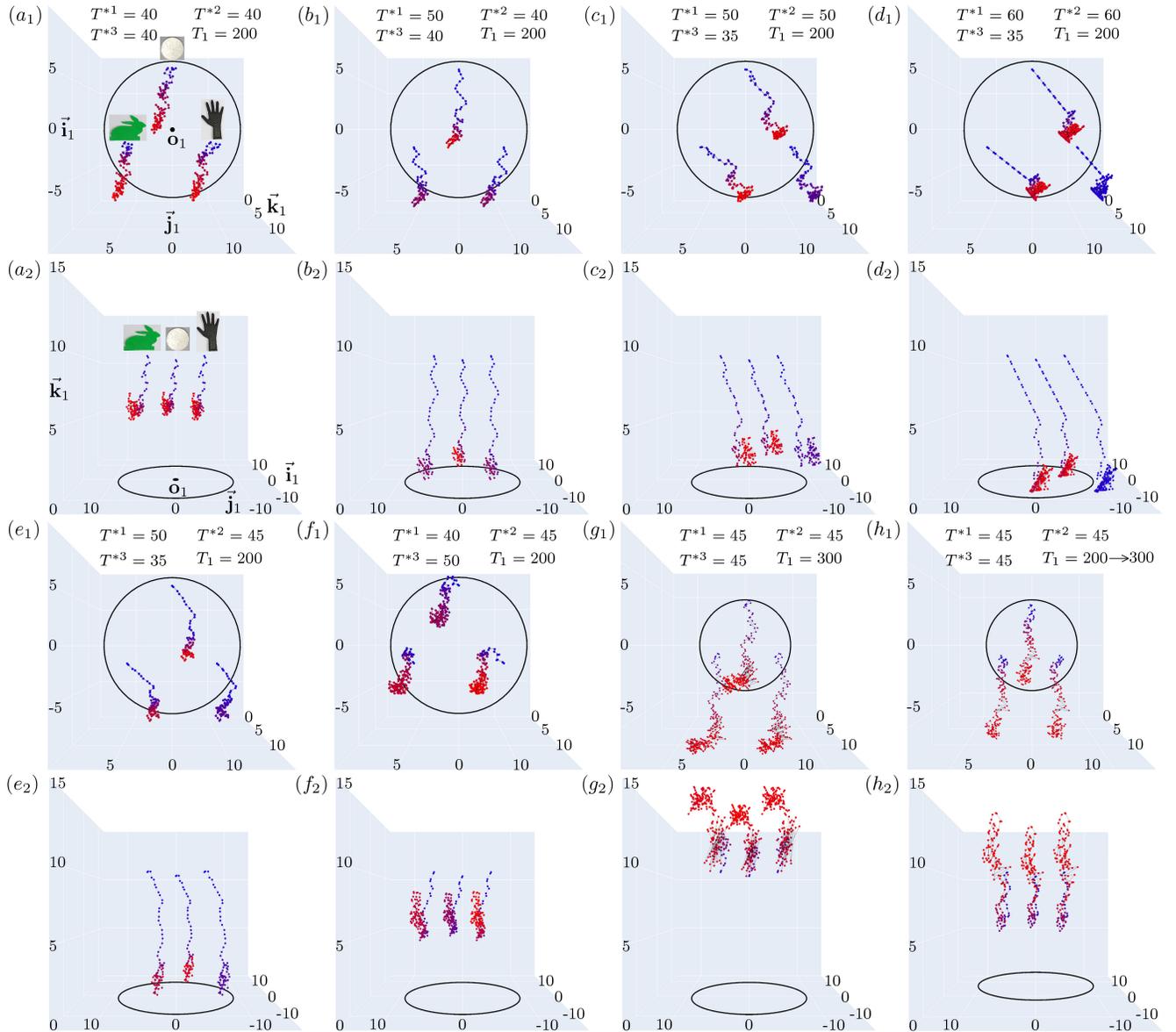}
				\caption{Spatial displacements of the three objects in the $8$ experiments with the adaptive controller visualized from two viewing angles. We use a blue-to-red color gradient to visualize the cold-to-hot change of temperatures during the experiments.}
				\label{fig:3 objects path}
\end{figure*}

In this section, we report an integrated experiment where the adaptive controller is combined with an online ArUco tracking algorithm to achieve temperature regulation while the heat source is changing. 
During setup, the ArUco markers attached to the heat source are used to geometrically calibrate the robot with the source, which is essential to compute the thermal interaction matrix $\mathbf L$.
Thus, when these relation are uncertain, their new configurations have to be updated simultaneously. 

\begin{figure}[t]
				\centering
				\includegraphics[width = \columnwidth]{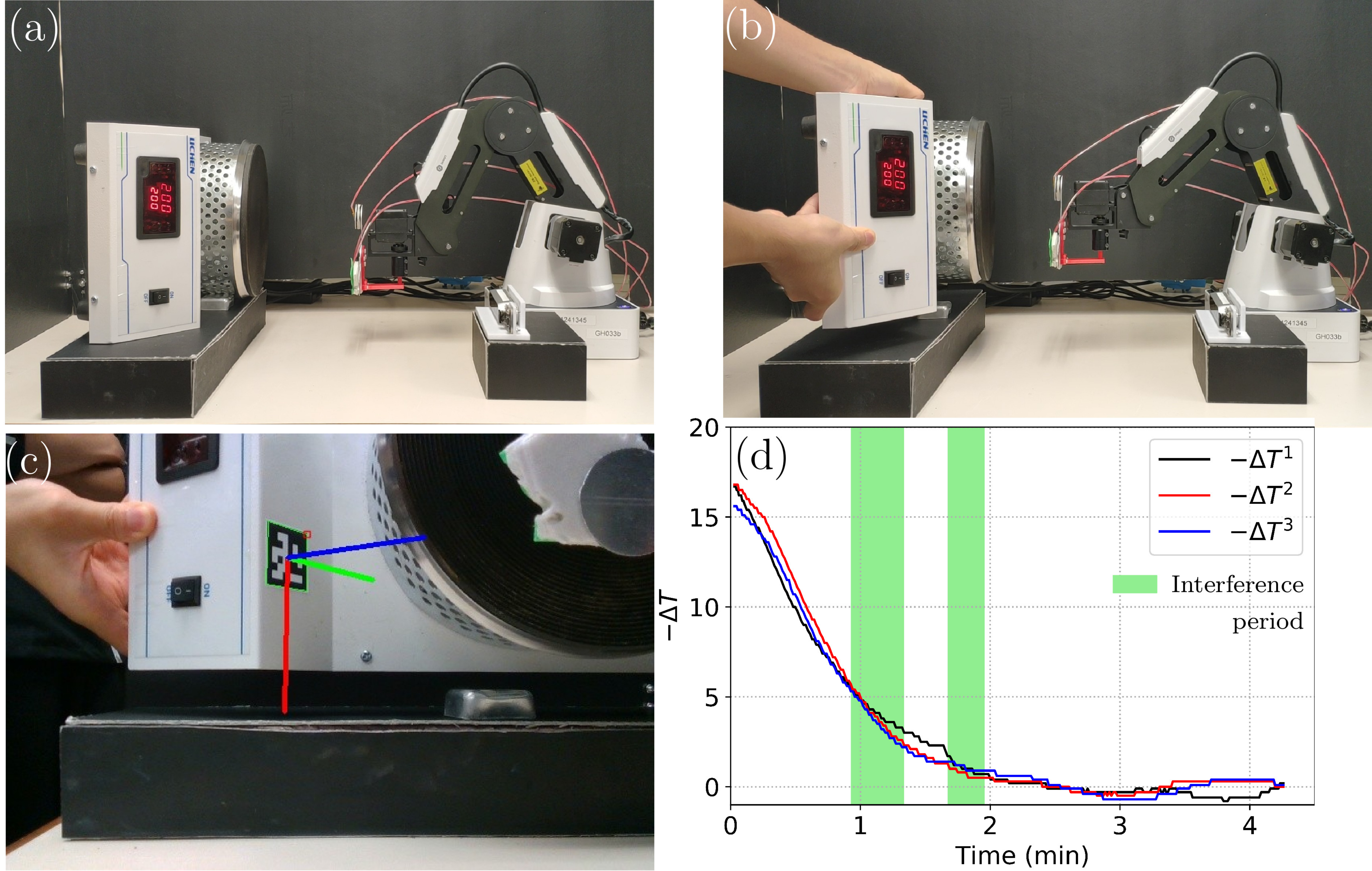}
				\caption{Experiments with the adaptive controller with disturbances.}
				\label{fig:interference experiments}
\end{figure}

Here, we study the case where the robot is fixed and the heat source is manually moved. 
We track marker $2$ attached to the source to obtain its configuration. 
The target temperature vector is set to $\boldsymbol{\tau}=[40,40,40{]}^\T$ $\SI{}{\celsius}$. 
Fig. \ref{fig:interference experiments} shows: (a) The initially calibrated set up, (b) the manual movement applied to the source, (c) the detected marker $2$ when the source is moving, and (d) the evolution of the individual temperature errors. 
These results show that by continuously updating the source-object pose, the control of the individual temperature errors is not significantly affected and that $\|\Delta\boldsymbol{\tau}\|$ can still be asymptotically minimized. 
This experiment demonstrates how our new thermal servoing method can be combined with other traditional controllers (visual servoing in this case) to extend the sensorimotor capabilities of a robot \cite{dna2020_front_neuro}.

\subsection{View Factor Visualization}

\label{sec: visualization}
In previous sections, we designed thermal controllers based on derived heat transfer models.
However, the models that relate $F_{21}$ and $\mathbf{x}$ are generally complex. 
Therefore, part of the controlled system behaves as a ``black box'' to the user. 
To investigate these aspects, in this section, we introduce the visualization of the view factor $F_{21}$ with respect to the end-effector configuration $\mathbf{x}$ as a useful tool for analyzing radiation-based thermal servoing problems.

As an example, we take the ``circular surfaces in arbitrary configurations'' case discussed in Section \ref{section:Circular Surfaces at Arbitrary Configurations}.
We implement the controlled variable method \cite{williams2007research} to split the 6-DOF pose $\mathbf{x}$ into two subsets: One where the translation coordinates $p_{1}$, $p_{2}$, and $p_{3}$ (measured in \SI{}{\cm}) are the controlled variables, and another where the rotation coordinates $\theta_{x}$, $\theta_{y}$, $\theta_{z}$ (measured in degrees) are controlled variables. 
In the translation subset, rotations are set to constant values of $\theta_{i}=0$, then, we compute $F_{21}$ for points in a selected working range of controlled variables $p_{1}, p_{2} \in [-20, 20]$, $p_{3} \in [0, 30]$ with a step of $1$ (with $48,000$ points in total). 
In the rotation subset, translations are similarly set to fixed constant values $p_{1,2}=0$ and $p_3=\SI{5}{\cm}$, then, points in the range of $\theta_{x},\theta_{y},\theta_{z} \in [-90,90]$ are computed with an incremental step of $2$ (with $729,000$ points in total).

We use the isosurface visualization tool provided by $\textit{Plotly}$ to visualize the data. 
The translation and rotations subsets are shown in Fig. \ref{fig:isosurface}, where 3-DOF end-effector configurations are represented by points in space, and the view factor values are represented by isosurfaces with different colors (the isosurfaces are formed by points which have the same or very close values of $F_{21}$). 
This visualization method is inspired by the approximation of the interaction matrix in \eqref{nummerical approximation of L_inv}, which reveals that $\mathbf L$ is positive proportional to the directional derivative of $F_{21}(\mathbf{x})$ along $\mathbf{x}$ as $\mathbf{L}^\T=\lambda_{1} \nabla_{\mathbf{x}} F_{21}(\mathbf{x})$. 
According to the definition of isosurface, the surface normal of every point on the surface also points in $\nabla_{\mathbf{x}} F_{21}(\mathbf{x})$ direction. 
In addition, the interval distance between isosurfaces with an equal value difference (also called ``isosurface interval'') reveals the magnitudes of the elements of $\nabla_{\mathbf{x}} F_{21}(\mathbf{x})$; A larger distance represents a smaller magnitude. 

As an example, let us analyze the translation subset shown in Fig. \ref{fig:isosurface} (a). 
For this single-object scenario, the normal vector at a point on the isosurface indicates the direction of the end-effector movement (as computed from the thermal controls \eqref{control input}, \eqref{eq:adaptive control input}) at that point. 
There are some characteristics of these isosurfaces that can be intuitively deducted from the setup, e.g., the symmetric spatial distribution of $F_{21}$ (due to the circular shape of the heat source), and the proportionality of values of $F_{21}$ with respect to the source-object separation. 

However, the visualization provides two useful pieces of information. 
First, that the centers of the incomplete spherical isosurfaces shift upwards when $F_{21}$ decreases, which means that at some points, movement in the $\vec{\mathbf{k}}_{1}$ direction will cause a decrease of $F_{21}$ (which seems counter-intuitive). 
See e.g. $\mathbf{c}_{2}$ on the $F_{21}=0.1$ isosurface in Fig. \ref{fig:isosurface} (a), which shows that in that configuration, the end-effector needs to move backwards along the $\widehat{\mathbf{k}}_{1}$ direction to heat up faster. 
Second, the isosurface intervals at regions that are farther from the heat source center are comparatively larger, which indicates that the end-effector will move comparatively faster in those regions.
Similarly, Fig. \ref{fig:isosurface} (b) shows the (much simpler) case where angles are varied at a fixed position.
 
To further analyze the effect of robot motion on heat transfer process, we now present a 4-DOF visualization result in Fig. \ref{fig: 9_view_factors}. 
Based on the previous 3-DOF translation subset, we add one more controlled variable $\theta_{x}$ to extend it to 4-DOF.
We vary the rotation $\theta_{x}$ from $0$ to $90 \SI{}{\celsius}$ and depict the change of translation subset isosurfaces in Fig. \ref{fig: 9_view_factors} (a), (b), (c).
For the previous rotation subset, we add one more controlled variable $p_{3}$ and vary it from $1$ to $10$ \SI{}{\cm}, as shown in Fig. \ref{fig: 9_view_factors} (d), (e), (f).
Similarly, we vary $p_{1}$ from $0$ to $10$ \SI{}{\cm} and depict the change of rotation subset isosurfaces in Fig. \ref{fig: 9_view_factors} (g), (h), (i).
An animation of this experiment is included in the supplementary video.

\begin{figure}[t]
                \centering
                \includegraphics[width = 1\columnwidth]{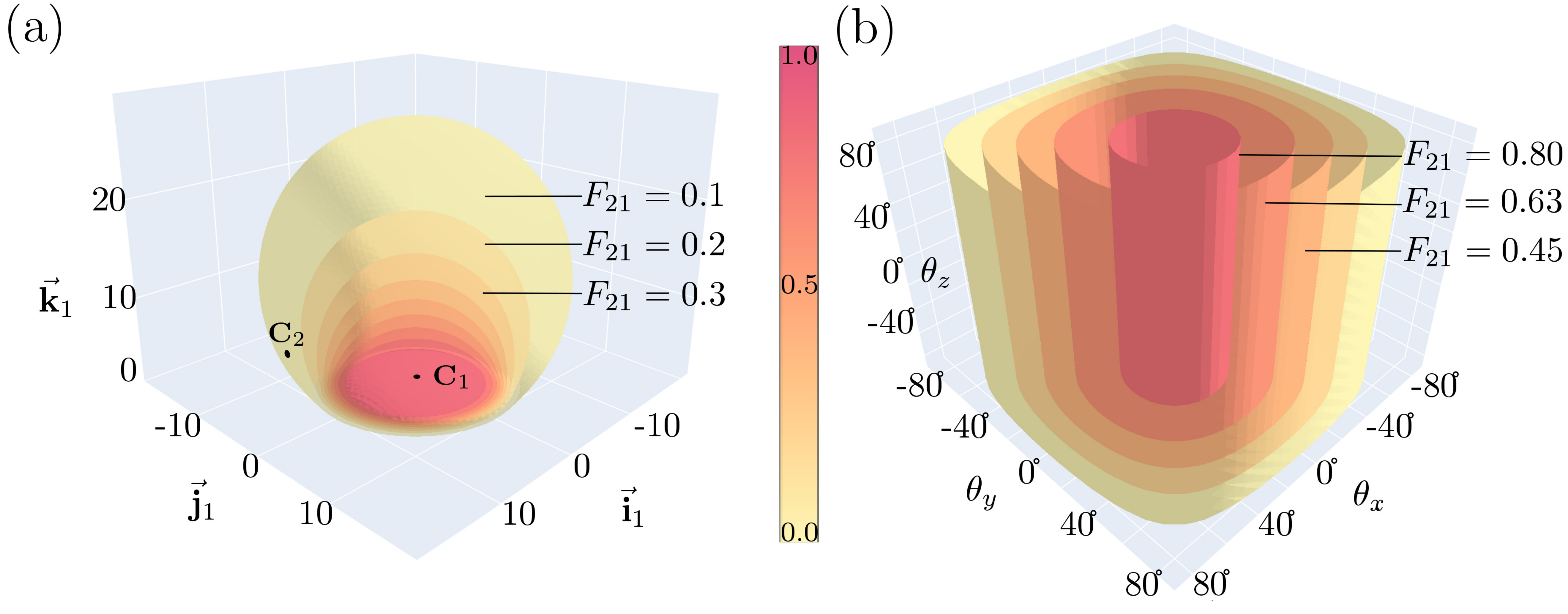}
                \caption{Isosurfaces visualization of the two view factor subsets: (a) Translation subset and (b) rotation subset.}
                \label{fig:isosurface}
\end{figure}
\begin{figure}
                \includegraphics[width =1 \columnwidth]{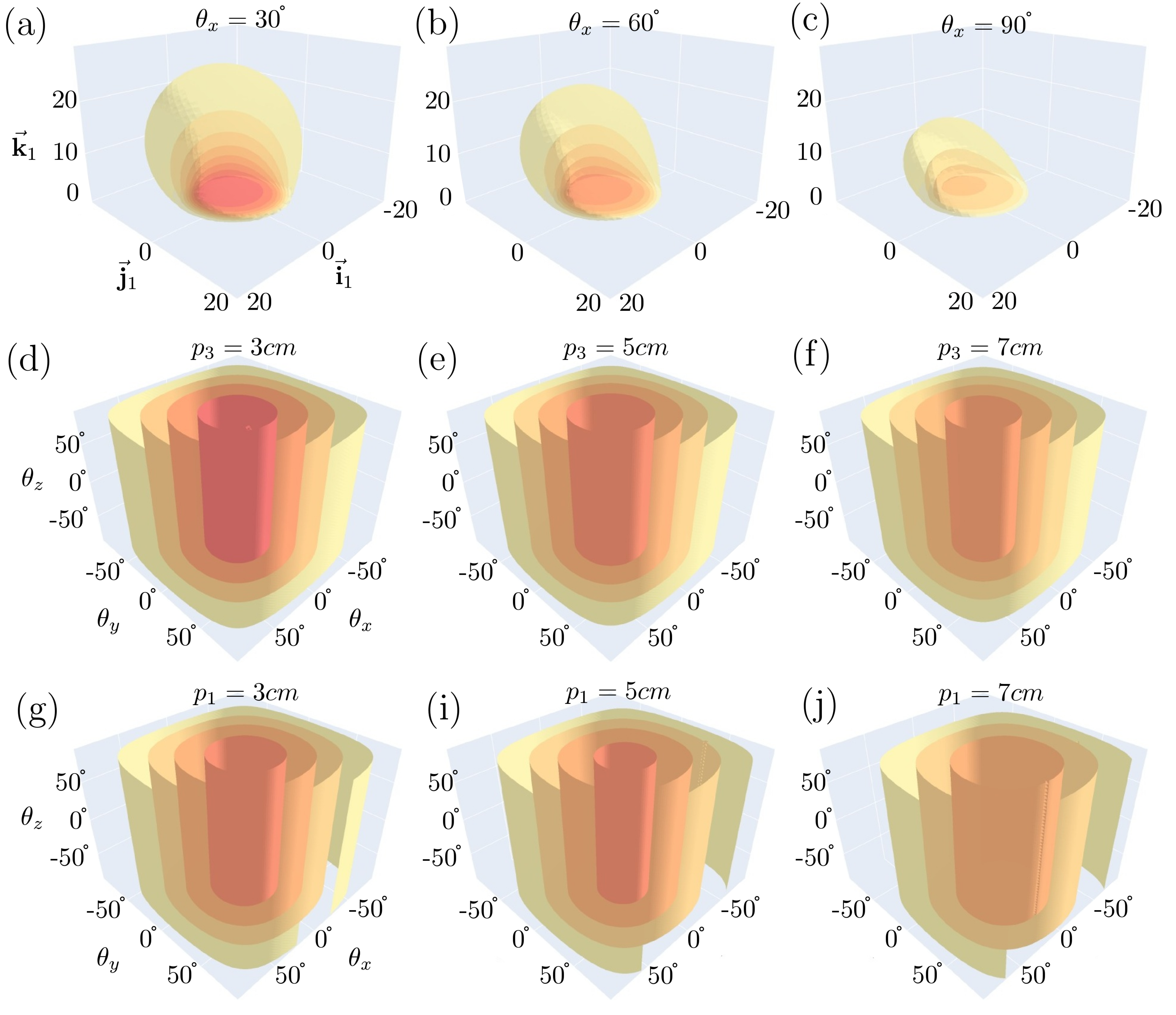}
                \caption{Controlled variable visualization of the view factor isosurfaces.}
                \label{fig: 9_view_factors}
\end{figure}

\subsection{Unfeasible Thermal Targets}
\label{sec: Result Unfeasible Thermal Targets}

In Section \ref{section:Target Feasibility}, we discussed two necessary but not sufficient conditions for feasible targets. 
When one of the two conditions is not fulfilled, the temperature error cannot be minimized to zero. 
This section reports and analyzes two failed experiments with the proposed adaptive controller where the target temperatures are set to $\boldsymbol{\tau}^{*} = [80,80,80{]}^\T$ and $\boldsymbol{\tau}^{*} = [70,35,35{]}^\T$ \SI{}{\celsius}; The temperature errors of each coordinate $\Delta T^{i}$ are depicted in Fig. \ref{fig:unfeasible targets} (a), (c). 
The evolution of the error $\|\Delta \boldsymbol{\tau}\|$ for the two experiments is shown in Fig. $\ref{fig:unfeasible targets}$ (b), (d). 
In this experimental study, we found that when all the individual target temperatures are set relatively ``high'' ($80$ \SI{}{\celsius} in this example), its corresponding errors converge to a local minimum.
Also, when the difference between individual target temperatures is too large, one of the objects might more closely reach its target, while the other will present steady-state errors. 

In Section \ref{section:Target Feasibility}, we prove that the steady-state temperature of an object heated by a radiative source is directly proportional to $F_{21}$. 
Thus, the geometry of view factor isosurfaces is a useful tool for analyzing the such reachability conditions. 
Here, we discuss a simple but representative case where two aluminum circular sheets with radius $r_{o1}=\SI{1.5}{\cm}$, $r_{o2}=\SI{4.5}{\cm}$ are attached to the end-effector at $\mathbf{o}_{1}$ and $\mathbf{o}_{2}$ (see Fig. $\ref{fig: unfeasible_end_effector}$) and heated by a source with $T_1=\SI{200}{\celsius}$. 
The center of the end-effector is at $\mathbf{o}_{e}$, and $l_{e1}=l_{e2}=\SI{2}{\cm}$ are the distances between the centers $\mathbf{o}_{1}$ and $\mathbf{o}_{2}$ and the end-effector $\mathbf{o}_{e}$. 
The view factors of the objects are calculated based on the same setup as in previous sections. 
We use the visualization method where three translations are the controlled variables, for a parallel object and source surfaces. 

By using the expression \eqref{eq:steady state temperature} and assuming that the thermophysical properties are the same as mentioned in Section \ref{sec:set up}, the view factor values corresponding to steady-state temperatures $\SI{30}{\celsius},\SI{40}{\celsius},\SI{50}{\celsius}$ are calculated as $0.12$, $0.37$, and $0.65$. 
According to this one-to-one correspondence between the isosurface and the steady-state temperature, to automatically reach the target temperature $T^{*}$ can be geometrically interpreted as positioning the object center over the isosurface that corresponds to $T^{*}$.
Similarly, determining the feasibility of target temperatures $T^{*1}$ and $T^{*2}$ of two objects attached to the same end-effector is identical to finding whether there exists an end-effector pose that places both objects onto their ``desired isosurfaces''.

An example is shown in Fig. \ref{fig: unfeasible_isosurfaces}, where we denote the steady-state temperatures of objects 1 and 2 by $T_{ss}^{1}$ and $T_{ss}^{2}$. 
Fig. \ref{fig: unfeasible_isosurfaces} (a)--(b) show the steady-state temperature isosurfaces of objects 1 and 2 where $T_{ss}^{1}=T_{ss}^{12}=\SI{30}{\celsius},\SI{40}{\celsius},\SI{50}{\celsius}$. 
Since the two objects are circular plates with different radii, the shapes of their isosurfaces are slightly different; We use red and blue color to differentiate them, and are jointly depicted in the same coordinate system in Fig. \ref{fig: unfeasible_isosurfaces} (c). 
 
\begin{figure}[t]
                \centering
                \includegraphics[width =1 \columnwidth]{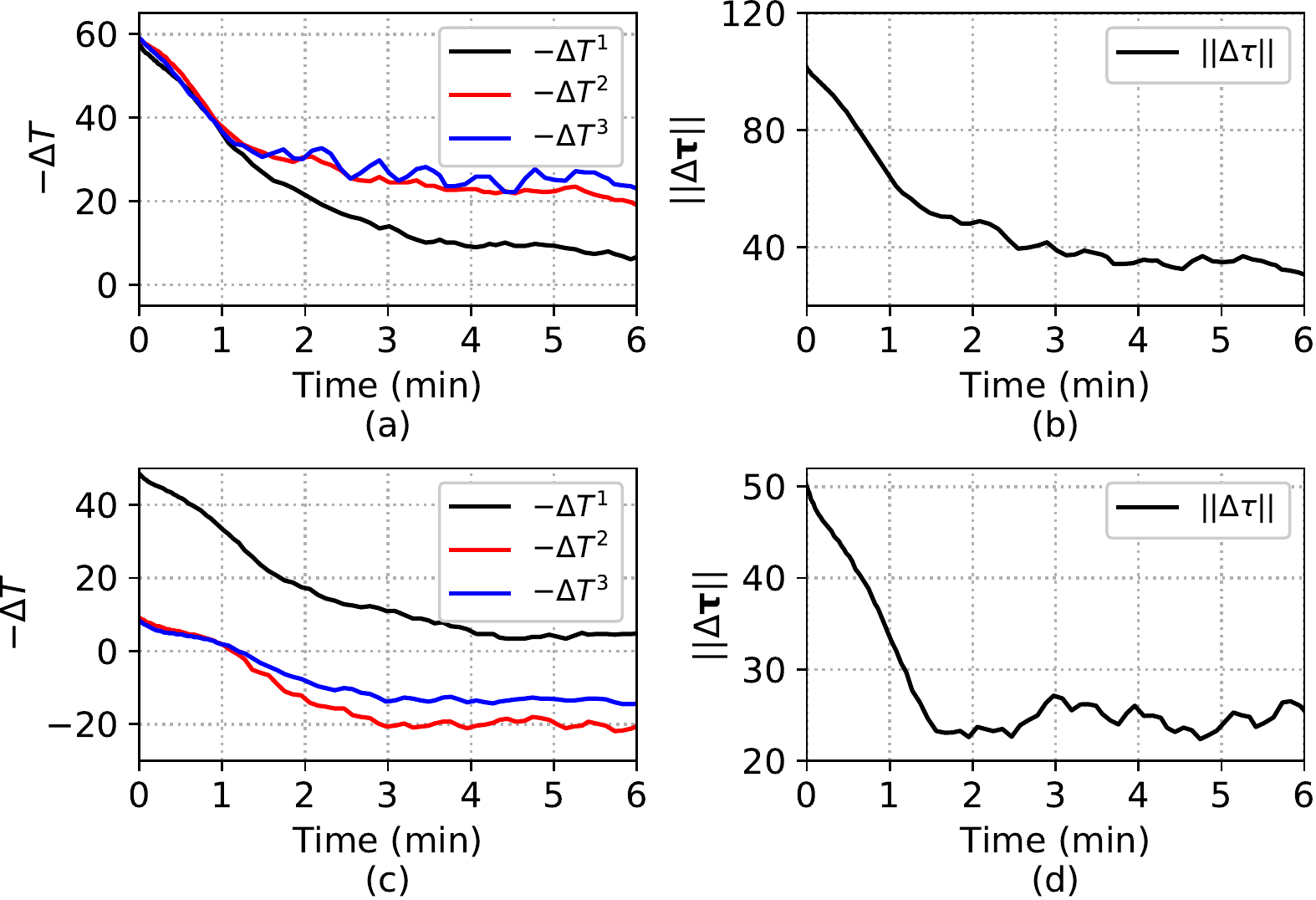}
                \caption{Evolution of the temperature errors with unfeasible target temperatures: (top) $\boldsymbol{\tau}^{*} = [80,80,80{]}^\T$ \SI{}{\celsius}, and (bottom) $\boldsymbol{\tau}^{*} = [70,35,35{]}^\T$.}
                \label{fig:unfeasible targets}
\end{figure}

In Fig. \ref{fig: unfeasible_isosurfaces} (d), (e), (f), different combinations of target temperatures $T^{1*},T^{2*}$ and their corresponding isosurfaces are depicted. 
These figures graphically demonstrate how for thermal targets $T^{*1}=T^{*2}=\SI{30}{\celsius}$ (depicted in Fig. \ref{fig: unfeasible_isosurfaces} (d)), and $T^{*1}=\SI{30}{\celsius}$, $T^{*2}=\SI{40}{\celsius}$ (depicted in Fig. \ref{fig: unfeasible_isosurfaces} (e)), the end-effector can position the objects into their desired final isosurfaces (corresponding to their target steady-state temperatures); The initial position of this trajectory is colored in blue, the final in red.
However, for the case where $T^{*1}=\SI{50}{\celsius}$ and $T^{*2}=\SI{30}{\celsius}$, Fig. \ref{fig: unfeasible_isosurfaces} (f) graphically demonstrates that the minimum distance between the two isosurfaces is larger than $l_{e1}+l_{e2}$; Therefore, these target temperatures $T^{*1}$ and $T^{*2}$ are \emph{not feasible}. 
Similarly, if $l_{e1}+l_{e2}$ is larger than the maximum distance between two target isosurfaces, that combination of $T^{*1}$ and $T^{*2}$ is also unfeasible. 
 
In general, thermophysical properties, view factors, and fixed spatial relationships between objects are the main three factors that determine the feasibility of thermal targets. 
The geometric interpretation of the feasibility problem might also be useful for path planning-like algorithms dealing with thermal servoing problems.

The accompanying multimedia file demonstrates the performance of our new control methodology with multiple experimental results. 

\begin{figure}[t]
                \centering
                \includegraphics[width =1 \columnwidth]{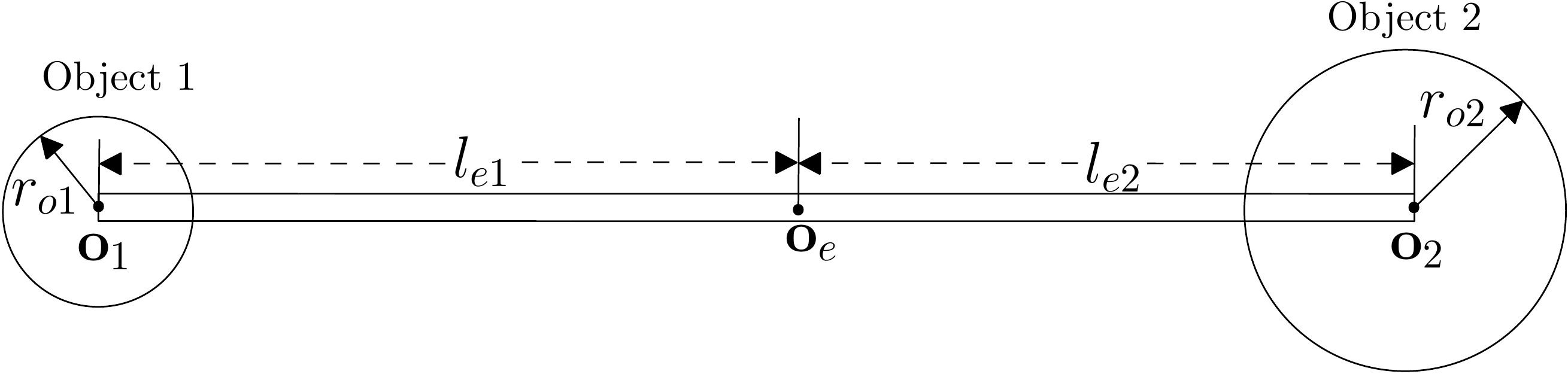}
                \caption{Conceptual illustration of two objects fixed to an end-effector for analyzing unfeasible target temperatures.}
                \label{fig: unfeasible_end_effector}
\end{figure}

\begin{figure}
                \centering
                \includegraphics[width =1 \columnwidth]{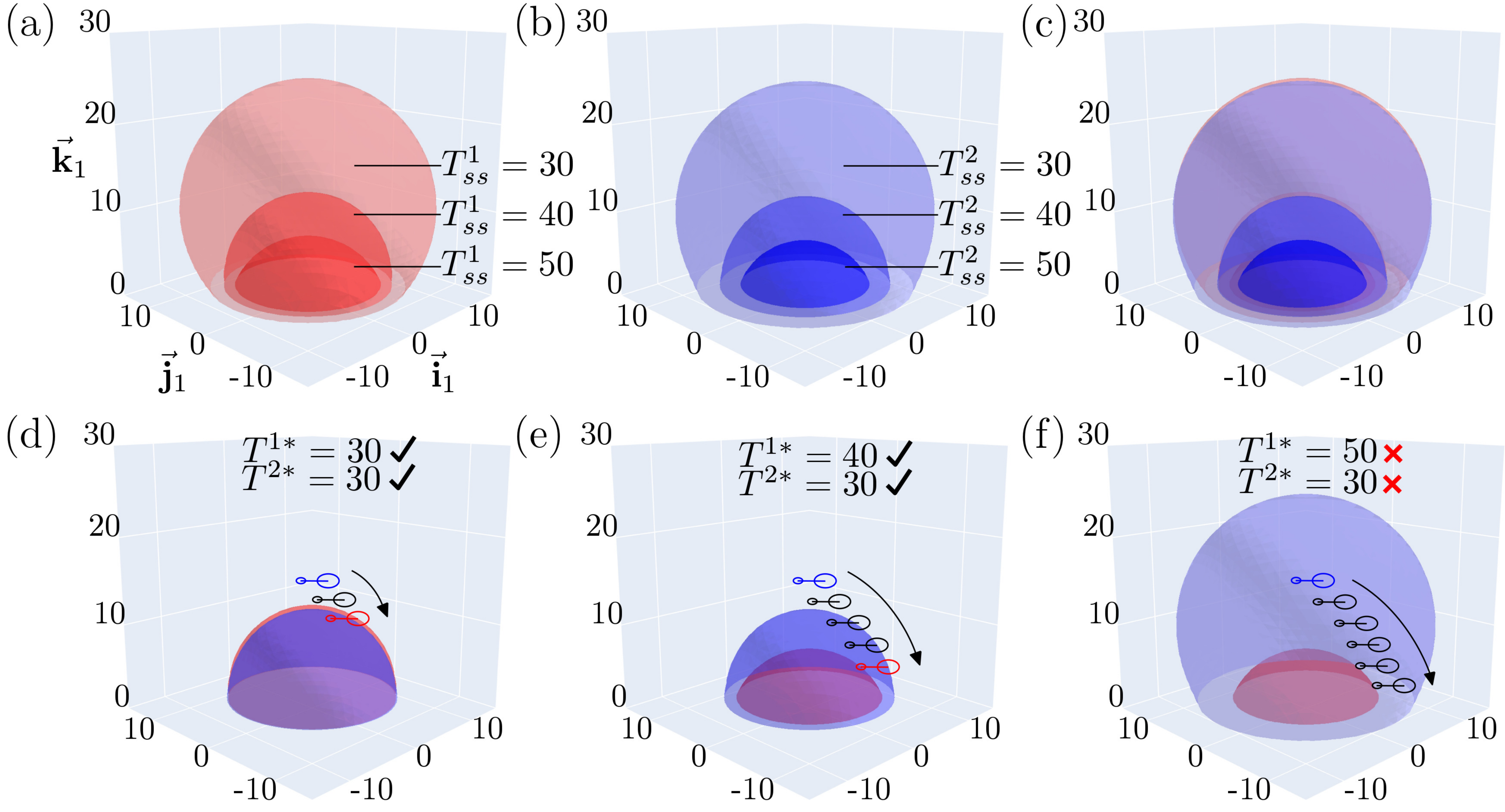}
                \caption{Geometric explanation of the target temperature feasibility using the steady-state temperature isosurfaces.}
                \label{fig: unfeasible_isosurfaces}
\end{figure}

\section{CONCLUSION}
\label{conclusion}
In this paper, we present a new robotic temperature control technique based on heat radiation to automatically regulate temperatures of multiple objects. 
For that, we provide a comprehensive formulation of different scenarios of thermal servoing problems.
Two asymptotically stable controllers, one model-based and one adaptive, are designed and validated by a series of experiments where temperatures of three different objects are independently regulated. 
We also discussed potential applications of the isosurface visualization, such as analyzing the geometry of the seemingly invisible heat transfer process.

The key concept of the proposed method is to exploit the geometric-thermal-motor relations between the heat source and the surface for automatically computing motion controls. 
This advanced feedback control capability is needed to improve the performance of many economically-important applications. 
However, from the point of view of generality, the proposed algorithm has many limitations, since we only consider cases where heat radiation is dominant. 
For cases where other heat transfer modes are dominant or comparable (objects in contact with non-adiabatic surfaces, electrical equipment cooled by high speed air flow, human skin treated by laser thermal excitation, food heated up in a pan, etc.), different heat transfer models need to be analyzed. 
Another possible solution is to implement model-free control algorithms that primarily rely on collected data instead of analytical models (which in our case were formulated based on fundamental physical principles).

For future work, we would like to integrate thermal servoing with existing visual and proximity servoing algorithms; This multimodal perceptual and control capability is essential for developing advanced robotic temperature control systems in complex scenarios, such as service tasks in human environments and and intelligent industrial manufacturing. 
Our team is currently working towards developing algorithms which simultaneously consider the three basic modes of heat transfer.
For this situation, thermal images (which provide detailed temperature profiles of an object surface) may be used as a sensing system. 
We encourage interested readers to work along these open research directions.  

\section{APPENDIX}
\subsection{Online Estimation of the Interaction Matrix}
\label{section:discrete integral}
We use the numerical definite integral solver provided by $\textit{SciPy}$ \cite{2020SciPy-NMeth} to approximate the interaction matrix in real time. The essence of numerical integral is to divide the complex integrand into small subsections, and approximate each subsection with a polynomial that is easy to integrate. A commonly used method is the composite Simpson's rule\cite{hai2008elementary}, which approximates the subsection by quadratic polynomials. The general form of composite Simpson's rule is as follows:
\begin{equation}
\begin{split}
   &\int_{c}^{d} g(x) \diff x \approx \\
    & \frac{h}{3}\left[g\left(x_{0}\right)+2 \sum_{j=1}^{n / 2-1} g\left(x_{2 j}\right)+4 \sum_{j=1}^{n / 2} g\left(x_{2 j-1}\right)+g\left(x_{n}\right)\right]
\end{split}
\end{equation}
where $n$ is the number of subintervals, $x_j=c+jh$ for $j=\mathrm{0,1,...,n-1,n}$ with $h=(d-c)/n$; in particular, $x_{0}=c$ and $x_{n}=d$. 

\subsection{Evaluation of the Estimated Interaction Matrix}
\label{section:comparison}
In Section \ref{section:Arbitrary Surfaces at Arbitrary Configurations}, we propose a new method that uses truncated Fourier series to approximate the view factor between irregularly shaped objects. Since the interaction matrices are estimated according to the view factor, the accuracy and computation time of the approximation  directly affects the performance of the system. In this section, we conduct a case study where the standard discrete surface integral (DSI) method\cite{muneer2015finite} and our proposed method are used to calculate the view factor between the bunny object and the circular heat source used in experiments. The center of the heat source is $\mathbf{c}_{1}=\begin{bmatrix}
0&0&0
\end{bmatrix}^{\T}$ and the center of the bunny object is $\mathbf{c}_{2}=\begin{bmatrix}
0&0&5
\end{bmatrix}^{\T}$ (measured in \SI{}{\cm}). We denote the number of discrete surfaces by $n$ and the number of harmonics by $h$. 

Since there is no explicit formula to calculate the view factor of irregularly shaped objects, we take the estimated value $F_{21}=0.764602$ obtained by the DSI method when $n$ is set to $20000$ as the ground truth. The ground truth is then used to compute the error of the estimation when using different values of $n$ and $h$. The comparison result in terms of accuracy and computation time is depicted in Fig. \ref{fig: FEA_comparision}. We find that the result obtained by the proposed method converges to the ground truth in a short period of time. Thus, a desirable estimation of the interaction matrix can be obtained in real time.

\begin{figure}[t]
                \centering
                \includegraphics[width =1 \columnwidth]{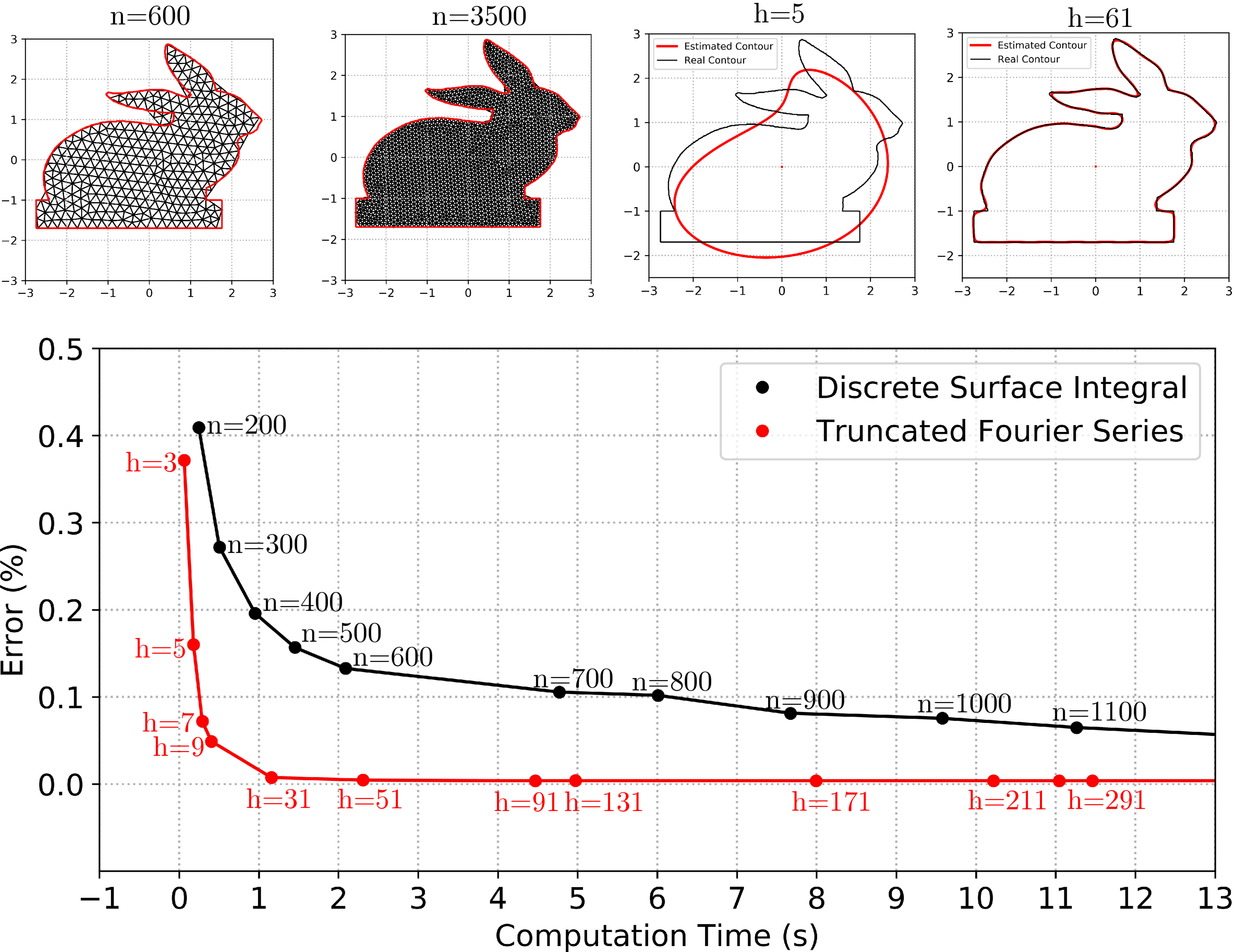}
                \caption{Comparison between the discrete surface integral method and the proposed truncated Fourier series algorithm. }
                \label{fig: FEA_comparision}
\end{figure}

\subsection{Polynomial Fitting with Sliding Window}
\label{sec:polynomial fitting}
Due to the limitation of the sensor accuracy and sampling rate, the raw feedback temperatures are discrete and noisy. This is undesirable for directly estimating the true values of $\mathbf{v}$. To this end, we implement polynomial fitting with a sliding window method, to estimate the values of $\boldsymbol{\tau}$ and $\mathbf{v}$ from the raw feedback data. For a single object, we denote its temperature as $T_{t}$ where $t$ is a variable of sampling time. A series of sampling times is denoted by $t_{1},t_{2},\cdots,t_{n}$. The size of the sliding window is set to be 10 data samples. When $n<10$, the estimation is at the initialization stage and the robot will not move. When $n\geq 10$, we denote the collected  10  temperature values by a vector $\mathbf{Y}_{s}=\begin{bmatrix}
T_{t_{n-9}}& T_{t_{n-8}}&\cdots&T_{t_{n}}
\end{bmatrix}^{\T}$. We then fit 10 sample points in the sliding window to the polynomial of order 3 as:
\begin{equation}
    \widehat{\mathbf{Y}}_{s}=\mathbf{P}\mathbf{c}^{t_{n}}=
    \begin{bmatrix}
    (t_{n-9})^{3}&(t_{n-9})^{2}&(t_{n-9})&1\\
    (t_{n-8})^{3}&(t_{n-8})^{2}&(t_{n-8})&1\\
    \vdots&\vdots&\vdots\\
    (t_{n})^{3}&(t_{n})^{2}&(t_{n})&1
    \end{bmatrix}
    \begin{bmatrix}
    c_{3}^{t_{n}}\\c_{2}^{t_{n}}\\c_{1}^{t_{n}}\\c_{0}^{t_{n}}\\
    \end{bmatrix}
\end{equation}
where $\mathbf{c}^{t_{n}}$ is the coefficient vector of the polynomial at sample time $t_{n}$. To minimize $\| \mathbf{Y}_{s}-\widehat{\mathbf{Y}}_{s} \|^2$, the coefficients are computed as $\mathbf{c}^{t_{n}}={\mathbf{P}}^{+}\mathbf{Y}_{s}$. Then, the temperature and temperature rate of a single object can be estimated as:
\begin{align}
    &T_{t_{n}}=c^{t_{n}}_{3}(t_{n})^3+c^{t_{n}}_{2}(t_{n})^2+c^{t_{n}}_{1}(t_{n})+c^{t_{n}}_{0}\\
    &v_{t_{n}}=3c^{t_{n}}_{3}(t_{n})^2+2c^{t_{n}}_{2}(t_{n})+c^{t_{n}}_{1}
\end{align}
When the new data is obtained, we update $\widehat{\mathbf{Y}}_{s}$ and $\mathbf{P}$. By following the same procedure, object temperature rate can be estimated accordingly.

\ifCLASSOPTIONcaptionsoff
  \newpage
\fi

\bibliography{bibliography.bib}
\bibliographystyle{IEEEtran}

\begin{IEEEbiography} [{\includegraphics[width=1in,height=1.25in,clip,keepaspectratio]{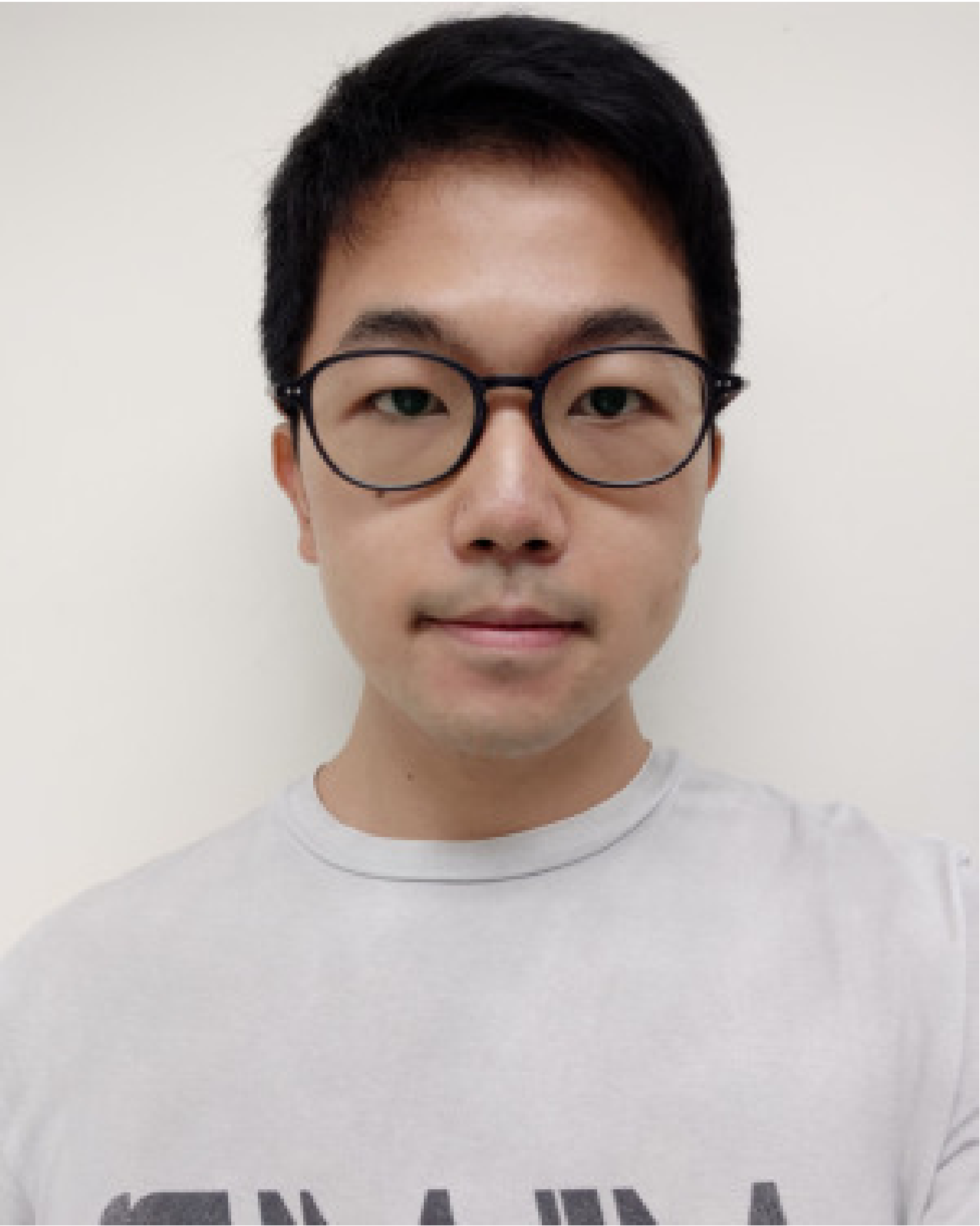}}] 
	{Luyin Hu} received his BEng. degree in Mechanical Engineering from The Hong Kong Polytechnic University, KLN, Hong Kong in 2019. Currently, he is pursuing an MPhil degree in Mechanical Engineering at the same university, where he also works as a Research Assistant.
	His research interests include multimodal robot perception, servomechanisms, and control system design.
\end{IEEEbiography}

\begin{IEEEbiography} [{\includegraphics[width=1in,height=1.25in,clip,keepaspectratio]{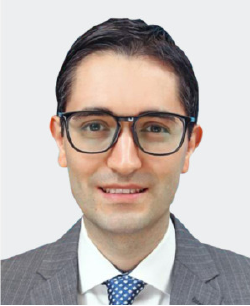}}]
	{David Navarro-Alarcon} (GS'06--M'14--SM'19) received the Ph.D. degree in Mechanical and Automation Engineering from The Chinese University of Hong Kong (CUHK), Shatin, Hong Kong, in 2014.
	
	He was an Assistant (Research) Professor at the CUHK T Stone Robotics Institute, from 2015 to 2017. 
	Since 2017, he has been with The Hong Kong Polytechnic University, KLN, Hong Kong, where he is an Assistant Professor at the Department of Mechanical Engineering. 
	His current research interests include perceptual robotics and control theory.
\end{IEEEbiography}

\begin{IEEEbiography}
[{\includegraphics[width=1in,height=1.25in,clip,keepaspectratio]{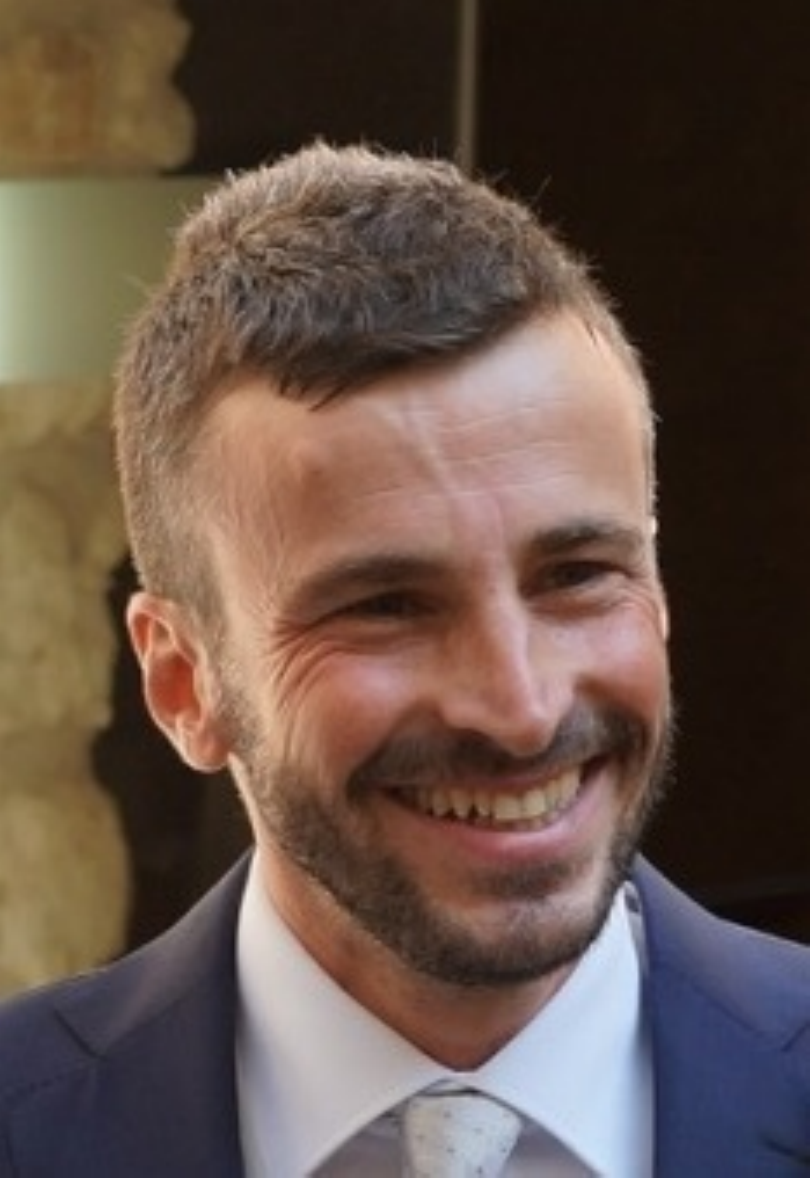}}]
{Andrea Cherubini} received the M.Sc. in Mechanical Engineering in 2001 from the University of Rome La Sapienza and a second M.Sc. in Control Systems in 2003 from the University of Sheffield, U.K. In 2008, he received the Ph.D. in Control Systems from La Sapienza. From 2008 to 2011, he was postdoc at INRIA Rennes. Since 2011, he is Associate Professor at Universit\'e de Montpellier. His research interests include: physical human–robot interaction, and manipulation of soft objects.
\end{IEEEbiography}

\begin{IEEEbiography}
[{\includegraphics[width=1in,height=1.25in,clip,keepaspectratio]{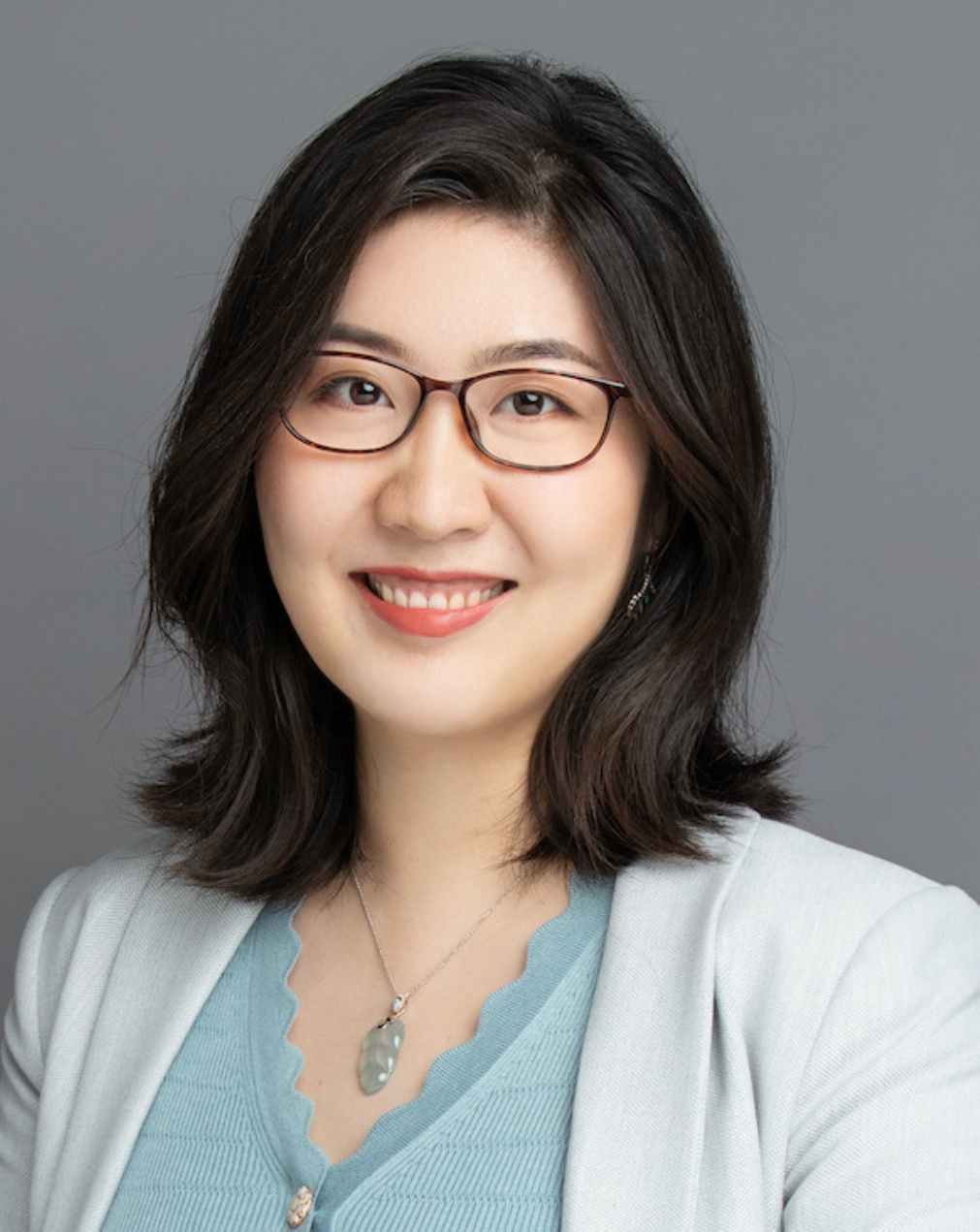}}]
{Mengying Li} received her PhD degree in Mechanical and Aerospace Engineering from University of California San Diego. From 2018 to 2020, She was a Postdoctoral Scholar in the Center for Energy Research of UC San Diego. She joined The Hong Kong Polytechnic University as an Assistant Professor in the Department of Mechanical Engineering since 2020. Her research interests include atmospheric radiative heat transfer for renewable energy integration and design of multi-generation systems.
\end{IEEEbiography}

\end{document}